\theoremstyle{plain}
\newtheorem{theorem}{Theorem}[section]
\newtheorem{definition}[theorem]{Definition}
\newtheorem{corollary}[theorem]{Corollary}
\newtheorem{assumption}{Assumption}
\theoremstyle{remark}
\newtheorem{remark}[theorem]{Remark}
\lstdefinelanguage{Julia}{
  morekeywords={function, end, if, else, elseif, for, while, return, struct, mutable, let, in, begin, try, catch, finally, import, using},
  sensitive=true,
  morecomment=[l]\#,
  morestring=[b]",
}
\DeclareMathOperator{\ReLU}{ReLU}
\DeclareMathOperator{\minmax}{minmax}
\lstdefinelanguage{Julia}{
  morekeywords={function, return, for, if, else, end, in, while, true, false, struct, mutable, using, import, export, abstract, type, let, local, global, const},
  sensitive=true,
  morecomment=[l]\#,
  morestring=[b]",
  morestring=[b]'
}
\tiny\color{gray},
\title{On the algorithmic construction of deep ReLU networks}
\author{Daan Huybrechs}
\begin{document}
\maketitle

\begin{abstract}
It is difficult to describe in mathematical terms what a neural network trained on data represents. On the other hand, there is a growing mathematical understanding of what neural networks are in principle capable of representing. Feedforward neural networks using the ReLU activation function represent continuous and piecewise linear functions and can approximate many others. The study of their expressivity addresses the question: which ones? Contributing to the available answers, we take the perspective of a neural network as an algorithm. In this analogy, a neural network is programmed constructively, rather than trained from data. An interesting example is a sorting algorithm: we explicitly construct a neural network that sorts its inputs exactly, not approximately, and that, in a sense, has optimal computational complexity if the input dimension is large. Such constructed networks may have several billion parameters. We construct and analyze several other examples, both existing and new. We find that, in these examples, neural networks as algorithms are typically recursive and parallel. Compared to conventional algorithms, ReLU networks are restricted by having to be continuous. Moreover, the depth of recursion is limited by the depth of the network, with deep networks having superior properties over shallow ones.
\end{abstract}

\section{Introduction}

Neural networks with non-polynomial activation functions can approximate any continuous function~\cite{cybenko1989universal,hornik1989universal}. However, the study of expressivity of neural networks not only describes which functions can be approximated. It also aims to relate function classes to the complexity that is required of a network, such as network depth and network width, to achieve approximation with a certain tolerance within that class. Compared to the study of shallow networks, the study of deep networks is relatively new. We refer to a number of review papers and books for a comprehensive discussion of these aspects and references to recent results~\cite{devore2021,grohs2022expressivity,motamed2024approximation,petersen2024deep}. Without aiming to be complete, function classes for which analysis of deep networks is available include standard smoothness spaces \cite{mhaskar1993multilayered,telgarsky2015representation,yarotsky2017deep}, wavelets and refinable functions~\cite{daubechies2022neural}, hp-approximation spaces and solutions to partial differential equations with singularities~\cite{opschoor2019highorderfem,marcati2020exponential,opschoor2024gevrey}. The approximation power of neural networks in comparison with other nonlinear approximation methods is demonstrated in~\cite{daubechies2022nonlinear}. Rational neural networks for function approximation, involving a rational activation function rather than ReLU, are investigated in~\cite{telgarsky2017rational,boulle2020rational}.

One motivation for such studies is simply to show that neural networks are expressive, in order to shed light on their success in machine learning. Indeed, if networks are very expressive, one can conclude that trained networks can represent very diverse sets of functions. It is another matter, and a largely separate question, to determine whether and how a network can be trained from data to achieve that broad flexibility~\cite{mhaskar2024manifold}. Still, often the analysis in expressivity studies is also constructive, meaning that the methods of proof not only show that functions can be approximated, but also how. The \emph{how} is a challenging question, which is mostly due to the highly composite nature of neural networks -- especially of deep neural networks.
Composition is evident from the standard expression of a feedforward neural network with input $x$,
\[
f(x) = A_1 \sigma( A_2 \sigma(A_3 \ldots +b_3) + b_2) + b_1,
\]
involving weight matrices $A_k$, bias vectors $b_k$, and an activation function $\sigma$ which is understood to act componentwise on a vector. The composition of functions precludes the use of most existing mathematical techniques in approximation theory.

The goal of the paper is to focus on this aspect: how do neural networks represent functions?
The main perspective we adopt is that of a neural network as an algorithm. This is motivated by two main analogies. First, the composition of functions relates to the concept of recursion in an algorithm. In contrast to function composition, recursion is well understood. Second, since a layer in a neural network has many neurons acting independently on their inputs, the layer can be thought of as processing in parallel. In addition to the concepts of recursion and parallelism, we show that the ReLU activation function enables algorithmic control flow using a form of conditional branching.

\subsection{Conditional branching in ReLU network algorithms}

The ReLU activation function is defined mathematically by
\begin{equation}\label{eq:relu}
\ReLU(x) = \max(0,x).
\end{equation}
It is continuous and piecewise linear, and compositions of affine combinations of ReLU maintain that property. ReLU is popular in machine learning not for this property, but for its success in combination with training and optimization approaches. Other activation functions typically also work with varying degrees of success. In contrast, in our algorithmic perspective the precise form of ReLU is crucial.

An algorithmic formulation of~\eqref{eq:relu} has the form of a conditional statement:
\begin{verbatim}
    if x >= 0 then
        x
    else
        0
    end
\end{verbatim}
The two possible branches lead to a different outcome, namely $x$ or $0$, which should be seen as the result of the statement. We will show, formally and by examples, that more general conditions and then-else branches can be realized with a neural network. The above already generalizes when we consider one neuron in a network. Say the overall network has input $x$, then the input of one neuron is a function $a(x)$ of $x$. Its output is mathematically given by $\ReLU(a(x))$ and algorithmically by:
\begin{verbatim}
    if a(x) >= 0 then
        a(x)
    else
        0
    end
\end{verbatim}
In combination with other neurons, we are able to compile more general if-then-else statements.

Yet, one restriction always remains, namely that the output has to be a continuous function of $x$. If it is not, then it can not be realized with a ReLU network. In other words, in that case the conditional expression can not be compiled. Continuity here means that both branches have to agree if $x$ takes a critical value of the condition. Consider the second example, in which the chosen branch depends on the sign of $a(x)$. If $a(x^*)=0$ for a critical value $x=x^*$ then both branches agree: they yield the same value, namely $0$. Due to continuity, it is inconsequential whether the condition is formulated as $a(x) >= 0$ or as $a(x) > 0$, since at the critical value either branch can be taken.

\subsection{Structure of the paper and results}

The expressivity question of this paper is: which types of algorithms can be compiled to a neural network? In order to emphasize the algorithmic perspective, we start in~\S\ref{s:sorting} with what is arguably the most classical type of algorithm in computer science: a sorting algorithm. We describe algorithms for the approximation of functions using geometric folding operations in~\S\ref{s:folding}. To generalize such constructions, we formally consider the scope of conditional branching in neural networks in~\S\ref{s:branching}. Combining the latter two sections enables the `compilation' of certain algorithms into neural networks. We collect a number of examples in~\S\ref{s:examples}. All examples of this paper are illustrated with algorithms, available from an online repository.\footnote{The code is made available in the form of a Julia package \texttt{AlgorithmicNeuralNets.jl}, publicly accessible at \url{https://github.com/daanhb/AlgorithmicNeuralNets.jl}.}

Some results we wish to emphasize in each of these sections are:
\begin{itemize}
    \item in~\S\ref{s:sorting} we construct a network with 112 billion parameters which exactly sorts 16,384 inputs
    \item in~\S\ref{s:folding} we show that a single if-statement in Algorithm~\ref{alg:square} is responsible for exponentially many pieces of a piecewise linear function
    \item in~\S\ref{s:branching} we describe how a fairly broad class of conditional expressions can be compiled to a ReLU neural network
    \item in~\S\ref{s:examples} we construct exponentially converging neural network approximations to monomials, exponentials, trigonometric functions and the multiplication function $f(x,y)=xy$.
\end{itemize}

\subsection{Related work}

Before we expand on the algorithmic point of view, we mention a few more references which are close in spirit to the topic of the paper. The recursive nature of neural networks is emphasized by Mhaskar~\cite{mhaskar2020deep}. The constructive approach of this paper matches well with a \emph{calculus of networks} that was initiated in~\cite{petersen2018calculus}, in which networks representing functions are concatenated in various ways to represent more complicated functions. We adopt these concepts in the software implementation of our network algorithms. Several of the algorithms in this paper employ geometric folding operations, which are also crucial in~\cite{telgarsky2015representation} for the approximation of $f(x)=x^2$ and in~\cite{mccane2018radiallysymmetric} for the approximation of radially symmetric functions. The topic of sorting networks has recently been surveyed in~\cite{petersen2021differentiable}. We return to this reference in more detail in~\S\ref{s:sorting}. Finally, the question of Turing completeness for recurrent neural networks was raised in~\cite{siegelmann1995computationalpower}. We consider only feedforward networks, which can only store state if it has finite size known ahead of time.

\section{A neural network for exact sorting}\label{s:sorting}

Sorting algorithms are a classical topic in computer science~\cite{knuth1998taocp3}. We will start by showing that, seen as a function, the operation of sorting a vector is continuous and piecewise linear. Thus, it can be realized by a neural network using ReLU. The next question is how to do so efficiently.

Among the available sorting algorithms, we find that \emph{bitonic sort}~\cite{batcher1968sorting}, also called the \emph{bitonic sort network}, is a good match to neural networks. The main reason is that bitonic sort was designed to be implemented as a network in hardware. It is a parallel algorithm that processes several input lines simultaneously. In each step, half of the inputs are compared to the other half and sorted pairwise. The basic building block for this operation is a \emph{comparison element}, which takes two inputs and outputs them in sorted order. These comparison elements are structured like a network as shown in Fig.~\ref{fig:bitonic}. It is a feature of the method that the locations of the comparison elements are deterministic, i.e., they are non-adaptive. That property is why the network can be implemented in hardware. It is also why it can be realized with a neural network.

Among other sorting networks, bitonic sort aims to minimize the number of steps in the algorithm, since each step increases the latency in the hardware implementation. In our setting that corresponds exactly to minimizing the depth of the network. An input of dimension $N$ requires $\frac12 N (\log N)^2$ layers. Bubble sort can also be seen as a sorting network, but it would require ${\mathcal O}(N^2)$ layers.

The observation we make above about the compatibility of sorting networks with neural networks was already made in literature. Bitonic sort was proposed by Petersen et al in~\cite{petersen2021differentiable} precisely for its efficiency and its compatibility with neural networks. We also refer to the references in~\cite{petersen2021differentiable} for an overview of other recent uses of sorting functionality in neural networks. One property shared among all these references, which Petersen et al explicitly observe, is that the neural networks described in literature only sort approximately. The objective in the setting of~\cite{petersen2021differentiable} is not to sort inputs exactly, but to approximately maintain a relative order of inputs while learning absolute sizes. To that end, the authors advocate the use of smoothed $\min$ and $\max$ functions, which are compatible with gradient-based optimization and training. By realizing the $\min$ and $\max$ functions exactly, we show that sorting with neural networks can also be exact.

\subsection{Exact neural representation of min and max}

\begin{figure}[t]
  \centering

  \begin{subfigure}[b]{0.45\textwidth}
    \centering
    \includegraphics[width=\textwidth]{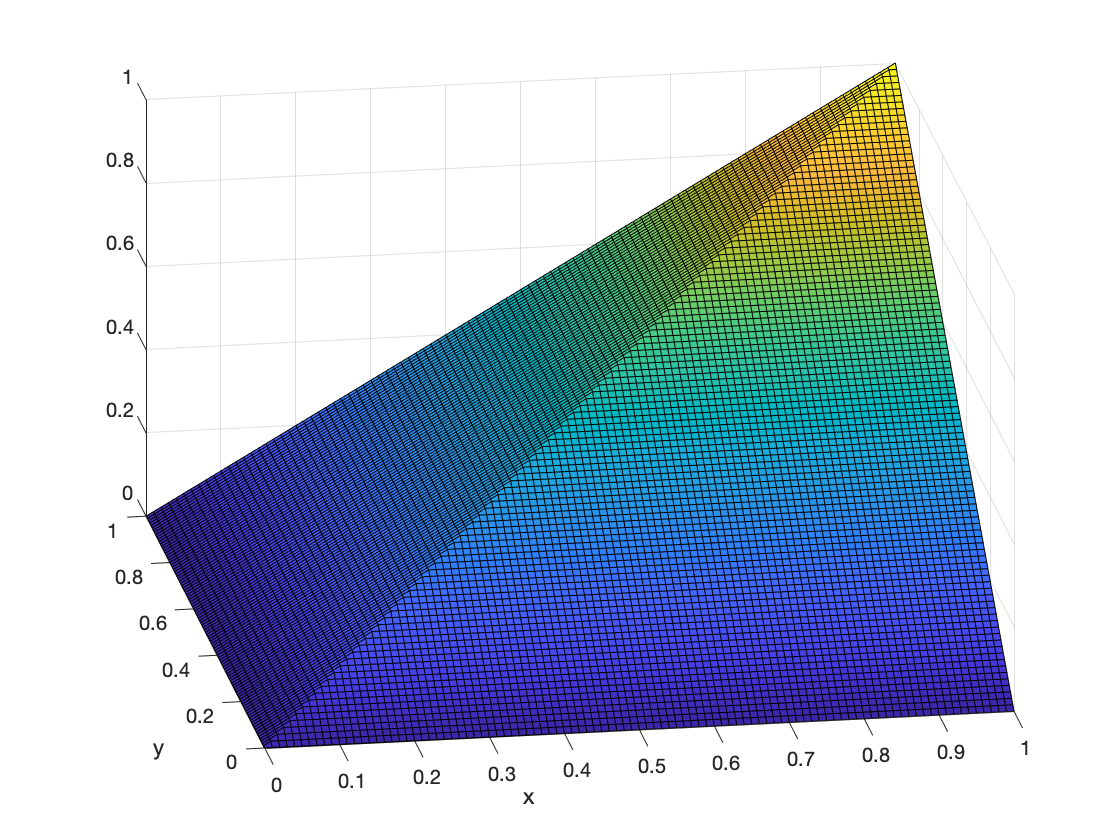}
    \caption{$\min(x,y)$}
    \label{fig:min}
  \end{subfigure}
  \hfill
  \begin{subfigure}[b]{0.45\textwidth}
    \centering
    \includegraphics[width=\textwidth]{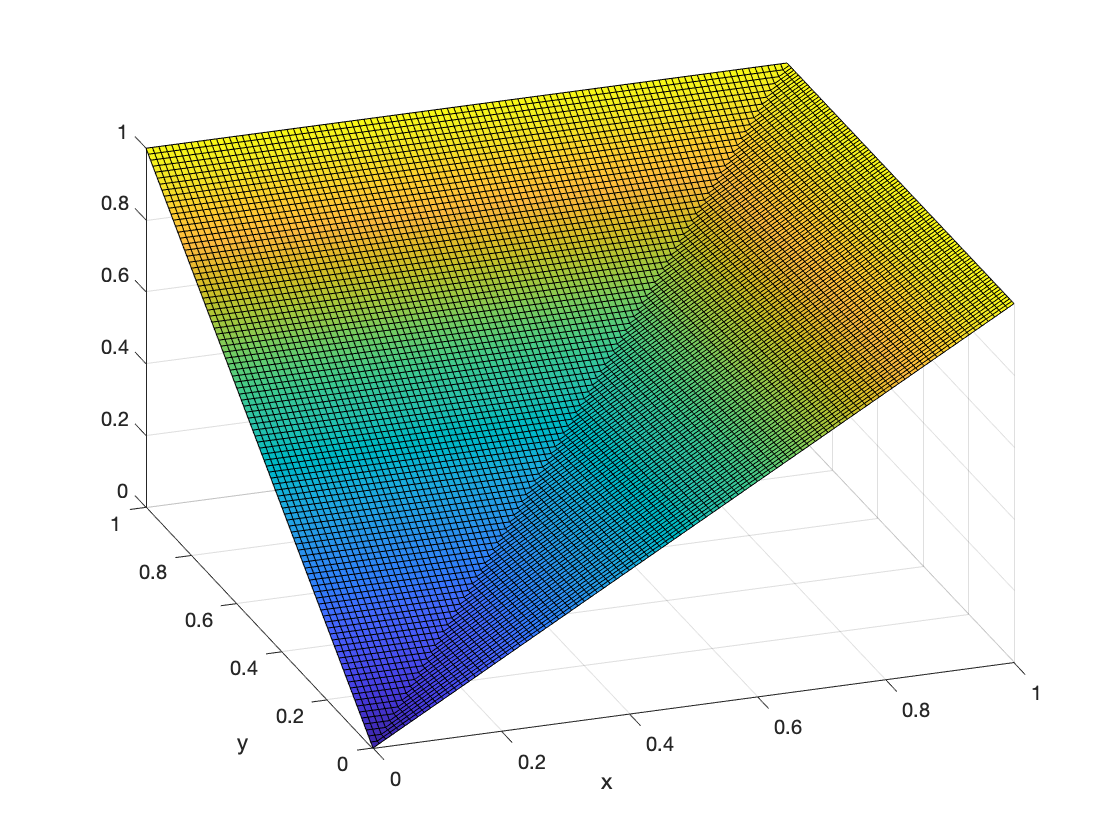}
    \caption{$\max(x,y)$}
    \label{fig:mac}
  \end{subfigure}

  \caption{The $\min$ and $\max$ functions are continuous and piecewise linear functions of $x$ and $y$.}
  \label{fig:minmax}
\end{figure}

The bivariate functions $\min(x,y)$ and $\max(x,y)$, with real arguments $x,y \in \mathbb{R}$, are continuous and piecewise linear in $x$ and $y$.  This property is illustrated in Fig.~\ref{fig:minmax}. In order to find its neural representation, we first formulate $\max$ as a conditional statement. This results in the simple algorithm shown in Algorithm~\ref{alg:max}.\footnote{The listing is meant to convey the algorithm in pseudocode. However, it is also valid Julia syntax.} The algorithm involves a single if-then-else statement.

\begin{lstlisting}[float, caption={An algorithm for the bivariate function $\max(x,y)$}, label={alg:max}]
function max(x, y)
    if x >= y then
        return x
    else
        return y
    end
end
\end{lstlisting}

The next step is to realize the same outcome using a ReLU neuron with a particular input. This can be achieved by subtracting $y$ from both branches of the statement, and adding it back afterwards. Thus, the conditional statement becomes
\begin{verbatim}
    if x-y >= 0 then
        x-y
    else
        0
    end
\end{verbatim}
We add $y$ to the result of this expression. In terms of ReLU this can be expressed as
\[
 \max(x,y) = \ReLU(x-y) + y.
\]
This formulation uses a single neuron with input $x-y$. However, it also requires a skipping connection for $y$. Skipping can be avoided by adding two more neurons:
\[
 \max(x,y) = \ReLU(x-y) + \ReLU(y) - \ReLU(-y).
\]

The $\min$ function is similar, and there are several possibilities. Aiming for a minimal number of neurons in total for $\min$ and $\max$, we like to reuse the two neurons involving $y$. This leads to
\[
 \min(x,y) = -\ReLU(y-x) + y = -\ReLU(y-x) + \ReLU(y) - \ReLU(-y).
\]

We now proceed to an explicit description of the weights. We have used the identity function $y = \ReLU(y) - \ReLU(-y)$ to avoid a skipping connection. This corresponds to a representation involving two weight matrices $A^{\textrm{Id}}_1$ and $A^{\textrm{Id}}_2$,
\[
 x = A^{\textrm{Id}}_2 \ReLU\left( A^{\textrm{Id}}_1 x\right) = \left[\begin{array}{cc}
 1 & -1\end{array}\right] \ReLU\left(\left[\begin{array}{c}
 1 \\ -1\end{array}\right] x \right).
\]
Here we adopt the typical convention that ReLU acts componentwise on a vector.

A min-max comparison element has two inputs, which we denote by $\mathbf{x} = \left[\begin{array}{cc}
 x\\
 y
\end{array}\right]$.
The formulas above lead to the representation
\begin{equation}\label{eq:minmax}
    \minmax(\mathbf{x}) = \left[\begin{array}{cc}
 \min(x,y)\\
 \max(x,y)
\end{array}\right] = A_1 \, \ReLU\left(A_2 \mathbf{x} \right),
\end{equation}
with
\[
A_2 = \left[\begin{array}{cc}
 1 & -1\\
 -1 & 1\\
 \mathbf{0} & A^{\textrm{Id}}_1
\end{array}\right] = \left[\begin{array}{cc}
 1 & -1\\
 -1 & 1\\
 0 & 1 \\
 0 & -1
\end{array}\right] \quad \mbox{and} \quad A_1 = \left[\begin{array}{ccc}
 0 & -1 & A^{\textrm{Id}}_2 \\
 1 & 0 & A^{\textrm{Id}}_2 
\end{array}\right]  = \left[\begin{array}{cccc}
 0 & -1 & 1 & -1 \\
 1 & 0 & 1 & -1
\end{array}\right].
\]
We have used block-matrix notation to highlight the structure of the matrices. The first row of $A_2$ encodes the input $x-y$ to the first neuron, while the second row encodes $y-x$ for the second neuron. The next two rows encode the required input for the identity neurons. The matrix $A_1$ rearranges the neuron outputs according to the formulas for $\min$ and $\max$. The matrices have size $4 \times 2$ and $2 \times 4$ respectively. There are no bias vectors in this example, or equivalently they are zero.

\subsection{Exact neural representation of the bitonic network}

\begin{figure}{t}
\begin{center}
 \includegraphics[width=0.8\linewidth]{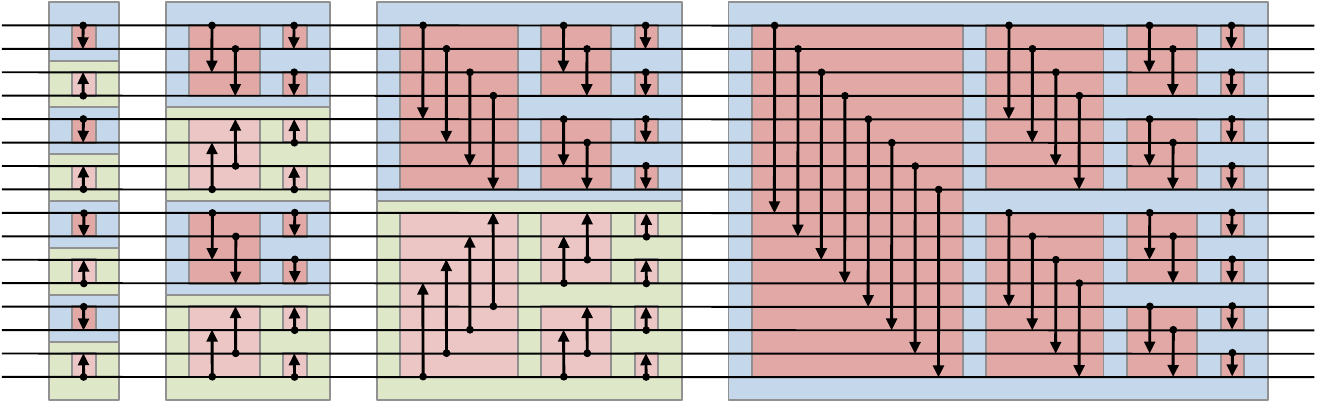}
\end{center}
\caption{Structure of the bitonic network (image from \cite{wikipedia-bitonic}). Each arrow represents a comparison element which operates on two lines and sorts them in ascending or descending order. The arrows represented in parallel in the same coloured box are also executed in parallel. This network has input dimension $16$ and it has $10$ sorting layers.}\label{fig:bitonic}
\end{figure}

We will not explain here in full how bitcoin sort operates. Knuth in \S5.2.2 of \cite{knuth1998taocp3} refers to it as \emph{Batcher's parallel method} and writes: ``His method is not at all obvious; in fact a fairly intricate proof is needed just to show that it is valid, since only comparatively few comparisons are made''. We also refer to the original paper by Batcher~\cite{batcher1968sorting} for a description of the method.

We do illuminate the structure of the algorithm. To that end we show in Fig.~\ref{fig:bitonic} the network structure of bitonic sort. This is also precisely the structure of the neural network we propose using the comparison element~\eqref{eq:minmax}. As shown in the figure, each comparison element connects two lines. To be more precise in our description, Algorithm~\ref{alg:bitonic_sort} shows a complete implementation of bitonic sort and details the exact locations of the comparison elements, which are independent of the input data.\footnote{This listing is valid Julia code. It is meant to be precise and reproducible, and hopefully also informative without knowing Julia. However, it can no longer be considered pseudocode. Note that we do not claim that we can automatically compile this algorithm to a neural network. We claim that a neural network can be made which realizes exactly this algorithm. In later examples, the \emph{compilation} sometimes relies on analytic information which is not made explicit in the listing of the algorithm, and hence compilation can not (yet) be fully automated. Still, the algorithms offers a complete understanding of what the neural network does.} We assume for simplicity that the input vector has dimension $N = 2^L$.

\subsection{Computational complexity and timings}

From the algorithm we can make some complexity statements. There are three loops. The two outer ones with indices $i$ and $j$ have at most $L=\log_2(N)$ iterations. The inner variable $k$ loops over the length of the vector $N$ and decides in a non-intuitive, yet deterministic and non-adaptive, way to exchange two elements after their comparison. The inner loop performs at most $N$ exchanges, and one can see that it is executed $\frac12 L(L+1)$ times. The latter is the number of hidden layers of the network, while the $N$ exchanges can be realized in parallel in one layer. Since a comparison element has $4$ neurons, and since there are exactly $N/2$ comparisons per layer, the width of a layer is $2N$.

\begin{lstlisting}[float=t,caption={Bitonic Sort in Julia.}, label={alg:bitonic_sort}]
minmax(x,y) = (min(x,y), max(x,y))
maxmin(x,y) = (max(x,y), min(x,y))

# In-place sorting of an input vector x
function bitonic_sort!(x)
    N = length(x)
    L = Int(log2(N))  # we assume that N = 2^L
    for i in 1:L
        for j in i-1:-1:0
            for k in 0:N-1
                l = xor(k,2^j) # bitwise XOR of k and 2^j
                if l > k
                    if (k & 2^i) == 0  # bitwise AND of k and 2^i
                        x[k+1],x[l+1] = minmax(x[k+1],x[l+1])
                    else
                        x[k+1],x[l+1] = maxmin(x[k+1],x[l+1])
                    end
                end
            end
        end
    end
end
\end{lstlisting}

Summarizing, we can make the following statements about the size of the network:
\begin{enumerate}
 \item The input and output layers have width $N = 2^L$.
 \item There are $\frac12 L(L+1)$ hidden layers with $2N$ neurons.
 \item The network has depth ${\mathcal O}(\log(N)^2)$ and width $2N$.
\end{enumerate}
The weight matrices between the hidden layers have size $2N \times 2N$.
There are also input and output weight matrices of size $2N \times N$ and $N \times 2N$ respectively, and bias vectors of size $2N$ for the hidden layers and $N$ for the final one. This means that in principle the total number of parameters in the network is:
\[
P = 4N^2 + 3N + \left(\frac12L(L+1)-1\right)(4N^2+2N) = {\mathcal O}(N^2 \log(N)^2).
\]
Yet, the matrices are highly sparse and the biases are all zero. A similar but more involved computation as the one above yields the number of nonzero parameters in the network to be
\[
 P_{\textrm{sparse}} = 6N+9N\left(\frac12L(L+1)/2-1\right) = {\mathcal O}(N \log(N)^2).
\]

\begin{table}[ht]
\centering
\begin{tabular}{c|c|c|c|c|c}
$L$ & $N$ & $P$ & $P_{\textrm{sparse}}$ & Construction time & Evaluation time \\
\hline
4  & 16 &  10k &  1.4k &  \SI{193}{\micro\second}  & \SI{5.6}{\micro\second} \\
6  & 64 &   346k & 11.9k  & \SI{2.6}{\milli\second}   & \SI{38.2}{\micro\second} \\
8  & 256 &  9.4m &  82.2k &  \SI{80}{\milli\second}  & \SI{169}{\micro\second} \\
10  & 1024 & 230m &  504k &  \SI{1.3}{\second} & \SI{1.0}{\milli\second} \\
12  & 4096 & 5.2b &  2.9m &  \SI{24}{\second} & \SI{5.5}{\milli\second}  \\
14 & 16384 & 112b & 15.4m & \SI{467}{\second} & \SI{30.1}{\milli\second} \\
\end{tabular}
\caption{Performance metrics of neural bitonic sort for different input sizes. Timings were ran on a desktop with a 3.4GHz Intel i5-3570 CPU and with 16GB RAM.}
\label{tab:bitonic}
\end{table}

In our implementation we have implemented the weight matrices as sparse matrices. This speeds up both the construction and the application of the network. A few performance numbers are listed in Table~\ref{tab:bitonic}. In addition to dense and sparse network size, we have included the time it takes to construct the network and the time to evaluate it for a given input vector of length $N$. Neither operation was optimized and the code is simply ran on a ten-year-old CPU. The construction time is not an accurate reflection of its theoretical computational complexity since the networks become large and various implementation factors are at play. The timings are included to demonstrate that it is feasible to make huge networks on modest hardware. The largest network has $107$ layers and $112$ billion parameters in total, of which $15.4$ million are nonzero. It is constructed in less than 10 minutes. The evaluation time does scale favourably with $N$.

Obviously the neural network algorithm is much slower than quicksort and other conventional sorting algorithms. Its realization here is meant to illustrate the point that a neural network can implement an algorithm. It is a deep neural network which we can completely understand.

\section{Function approximation by folding and unfolding}\label{s:folding}

We now turn our attention to the approximation of functions $f(x)$ by a network $\hat{f}(x)$ with input $x$. We use a geometric folding operation to construct recursive algorithms, which leads to explicit neural networks which we can understand and analyze.

\subsection{A special case: the approximation of $x^2$ on $[0,1]$}

The function $f(x) = x^2$ is not piecewise linear. Hence, it can only be approximately represented by a ReLU network. It turns out that this function is a special case which allows a very compact ReLU representation. This was described by Telgarsky in~\cite{telgarsky2015representation}. Later on, Yarotsky in~\cite{yarotsky2017deep} described how to approximate the multiplication function $f(x,y) = xy$ starting from the identity
\begin{equation}\label{eq:multiplication}
 xy = \frac12 \left( (x+y)^2 - x^2 - y^2 \right)
\end{equation}
and using Telgarsy's result. In turn, having an approximation to the multiplication operation allows one to approximate higher degree polynomials and many other smooth functions. This construction is the basis of several analytical (and often constructive) techniques to approximate more general classes of functions. It underlies, e.g., the analysis of hp-spaces and spaces of singular functions in~\cite{opschoor2019highorderfem,opschoor2024gevrey}. Thus, while $x^2$ is a special case, it is an important special case.

We will use the notation of Yarotsky to describe Telgarsky's approximation to $x^2$. First we define the hat function
\begin{equation}\label{eq:hat}
g(x) = \left\{\begin{array}{cc}2x,  & 0 \leq x < \frac12 \\ 
2(1-x), & \frac12 \leq x \leq 1 \\
0, & \mbox{otherwise.}\end{array} \right.
\end{equation}
The support of this hat function is the interval $[0,1]$, and it has its maximum value $1$ in the midpoint $x=1/2$. Telgarsky motivates this function as a mirror function which maps both $[0,1/2]$ and $[1/2,1]$ onto $[0,1]$. It is continuous and piecewise linear and can be implemented with three neurons. Next, the \emph{sawtooth function} $g_s(x)$ is the repeated application of $g$ to itself $s$ times:
\begin{equation}\label{eq:sawtooth}
 g_s(x) = \underbrace{g \circ g \circ \ldots \circ g}_{s}(x).
\end{equation}
The function $g_s(x)$ consists of $2^{s-1}$ dilated copies of a hat function on the interval $[0,1]$. Since it is a composite function, it can be implemented by a network with $s$ layers, each applying $g$.

The approximation to $x^2$ is~\cite[Lemma 2.4]{telgarsky2015representation}
\begin{equation}\label{eq:square}
 x^2 \approx f_s(x) = x - \sum_{s=1}^L \frac{1}{2^{2s}} g_s(x).
\end{equation}
This yields a piecewise linear approximation $f_L(x)$ to $x^2$ with $2^L$ pieces. It interpolates $x^2$ in the points $k \, 2^{-L}$ for $k=0,1,\ldots,2^L$.

One can verify that the approximation to $x^2$ is correct, but so far it has not led to similar approximations for other functions. Indeed, it seems a remarkable coincidence that $x^2$ can be written as a sum of sawtooth functions and a linear term. How special is this special case?

\subsection{Folding and unfolding with graded meshes on $[0,1]$}

\begin{figure}
    \begin{center}
    \includegraphics[width=0.8\linewidth]{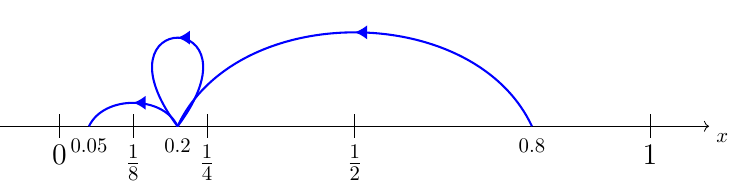}
    \end{center}
\caption{Illustration of the repeated foldings. First, the point $x = 0.8$ is reflected with respect to pivot $\frac12$ onto $h_1(x)=0.2$. Next, there is a loop at $0.2$ because $0.2$ is already smaller than the next pivot $\frac14$. Finally, $0.2$ is reflected with respect to $\frac18$ onto $h_3(0.2)=0.05$.}\label{fig:folds}
\end{figure}

The structure of the approximation is more visible when applied in reverse. We will describe the same construction using a folding operation, rather than mirroring. To that end, we define a \emph{folding function}
\[
 h_1(x) = \left\{\begin{array}{cc}x,  & 0 \leq x < \frac12, \\ 
1-x, &  \frac12 \leq x \leq 1.\end{array} \right.
\]
The effect of $h_1$ is that it folds $[0,1]$ onto $[0,\frac12]$. One can think of the midpoint $x=\frac12$ as the pivot of the folding. Note that $h_1(x)$ is continuous and piecewise linear.

\begin{lstlisting}[float=t,caption={Recursive approximation to $x^2$.}, label={alg:square}]
function square_approx(x, j = 1, L = 20)
    if j == L+1
        return 0.0      # alternative: return x/2^L (interpolating)
    end
    pivot = 1/2^j
    z = square_approx(pivot - abs(x-pivot), j+1, L)
    if x > pivot
        return z + 4*pivot*x - 4*pivot^2
    else
        return z
    end
end
\end{lstlisting}

Consider the function $f(x) = x^2$. We aim to relate $f(x)$ and $f(h_1(x))$. For $x \in \left[\frac12,1\right]$ we have
\begin{equation}\label{eq:square_relate}
 f(h_1(x)) = f(1-x) = (1-x)^2 = x^2 - 2x + 1 = f(x) - 2x + 1.
\end{equation}
For $x \in [0,\frac12]$ nothing happens since $h_1(x)=x$. Therefore,
\begin{equation}\label{eq:unfold1}
 f(x) = \left\{\begin{array}{cc}f(h_1(x)),  & 0 \leq x < \frac12, \\ 
f(h_1(x)) + 2x - 1, &  \frac12 \leq x \leq 1.\end{array} \right.
\end{equation}

The point is the following: we have reduced the approximation problem from $[0,1]$ to $\left[0,\frac12\right]$. If we succeed in approximating $f(x)$ on $[0,\frac12]$ then by \emph{unfolding} following~\eqref{eq:unfold1} we can also find an approximation on the larger interval $[0,1]$.

We can repeat the folding operation using smaller and smaller pivots. We define the folding function $h_j(x)$ with pivot $2^{-j}$ as
\[
 h_j(x) = \left\{\begin{array}{cc}x,  & 0 \leq x < 2^{-j}, \\ 
2^{-j+1}-x, &  2^{-j} \leq x \leq 2^{-j+1}.\end{array} \right.
\]
This is illustrated in Fig.~\ref{fig:folds}. An equivalent and perhaps clearer expression is
\[
h_j(x) = 2^{-j} - | x - 2^{-j}|.
\]
The operation maps $[0,2^{-j+1}]$ onto $[0,2^{-j}]$. Our reasoning is the same as above. For $x \in [2^{-j},2^{-j+1}]$ we have
\[
 f(h_j(x)) = f(2^{-j+1}-x) = (2^{-j+1}-x)^2 = x^2 - 2^{-j+2}x + 2^{-2j+2} = f(x) - 2^{-j+2}x + 2^{-2j+2}.
\]
We conclude that $f(x) = f(h_j(x)) + \eta_j(x)$ with the piecewise correction term given by
\begin{equation}\label{eq:square_correction}
 \eta_j(x) = \left\{\begin{array}{cc}0,  & 0 \leq x < 2^{-j}, \\ 
2^{-j+2}x - 2^{-2j+2}, &  2^{-j} \leq x \leq 2^{-j+1}.\end{array} \right.
\end{equation}
Equivalently,
\[
 \eta_j(x) = 2^{-j+1} \, |x-2^{-j}| + 2^{-j+1}x - 2^{-2j+1}.
\]
Note that $|x| = \ReLU(x) + \ReLU(-x)$, hence both $h_j(x)$ and $\eta_j(x)$ can be implemented in one layer of a ReLU network.

Repeating the folding procedure for $j=1,2,\ldots,L$ results in a \emph{geometrically graded mesh}. After $L$ steps we have written everything in terms of the approximation of $f(x)$ on $[0,2^{-L}]$. Here, we can choose to approximate by zero. That choice leads to Algorithm~\ref{alg:square}.

The algorithm is easy to analyze. We use the notation $p_1(x) = h_1(x)$, $p_2(x) = h_2(p_1(x)) = h_2(h_1(x))$ and so on. We denote the approximation using $L$ folds by
\begin{equation}\label{eq:square_approx}
f_L(x) = \sum_{j=1}^L \eta_j(p_j(x)).
\end{equation}

\begin{figure}[t]
  \centering

  \begin{subfigure}[b]{0.45\textwidth}
    \centering
    \includegraphics[width=\textwidth]{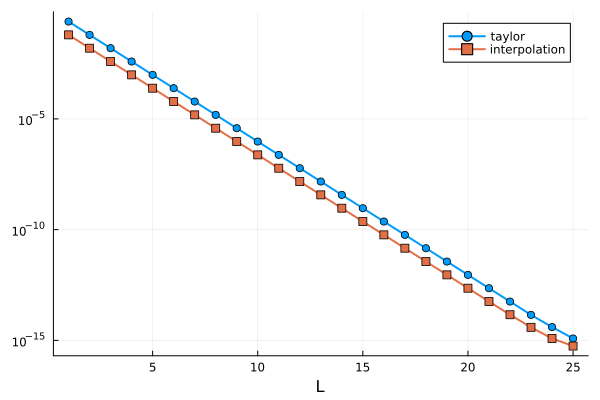}
  \end{subfigure}
  \hfill
  \begin{subfigure}[b]{0.45\textwidth}
    \centering
    \includegraphics[width=\textwidth]{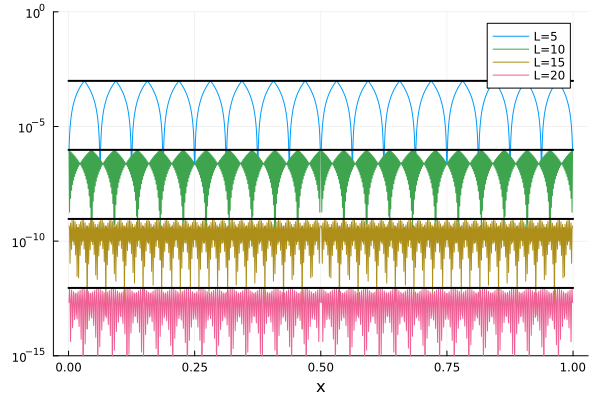}
  \end{subfigure}

  \caption{Approximation error of Algorithm~\ref{alg:square}. Left: maximum error on $[0,1]$ as a function of the level $L$, using initialization by zero or following Remark~\ref{rem:init}. Right: pointwise error on $[0,1]$ for different values of $L$. The solid black lines show the error bound $2^{-2L}$.}
  \label{fig:square}
\end{figure}

\begin{theorem}
The approximation error of~\eqref{eq:square_approx} for $x \in [0,1]$ is 
\[
x^2 - f_L(x) = p_L(x)^2.
\]
The error is bounded by $2^{-2L}$.
\end{theorem}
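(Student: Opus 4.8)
The plan is to prove the exact error identity by telescoping the single-fold relation, and then to read off the bound from the fact that the composite folds contract the domain geometrically. The whole argument is an exact algebraic cancellation plus a short induction; there is no genuine estimate to control.

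First I would record one fold as a pointwise algebraic identity. Using the equivalent absolute-value expressions $h_j(z) = 2^{-j} - |z - 2^{-j}|$ and $\eta_j(z) = 2^{-j+1}|z-2^{-j}| + 2^{-j+1}z - 2^{-2j+1}$, expanding $h_j(z)^2 = 2^{-2j} - 2^{-j+1}|z-2^{-j}| + (z-2^{-j})^2$ and recombining shows
\[
z^2 = h_j(z)^2 + \eta_j(z).
\]
This is the single-fold relation $f(z)=f(h_j(z))+\eta_j(z)$ derived before \eqref{eq:square_correction}, now written globally; in the absolute-value form it holds for every real $z$, so no case analysis is needed at this stage. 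Second, I would establish the nesting property that each composite fold contracts its domain: writing $p_0(x)=x$, I claim $p_j(x) \in [0,2^{-j}]$ for all $x \in [0,1]$. This follows by induction, since $p_0(x) \in [0,2^0]$ and $h_j$ maps $[0,2^{-j+1}]$ onto $[0,2^{-j}]$, so the hypothesis $p_{j-1}(x) \in [0,2^{-j+1}]$ forces $p_j(x) = h_j(p_{j-1}(x)) \in [0,2^{-j}]$.

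The core step is the telescoping. Evaluating the fold identity at $z = p_{j-1}(x)$ and using $h_j(p_{j-1}(x)) = p_j(x)$ gives
\[
p_{j-1}(x)^2 - p_j(x)^2 = \eta_j(p_{j-1}(x)), \qquad j = 1,\ldots,L.
\]
Summing over $j$, the left-hand side collapses to $p_0(x)^2 - p_L(x)^2 = x^2 - p_L(x)^2$, while the right-hand side is exactly the quantity $f_L(x)$ returned by Algorithm~\ref{alg:square}. Hence $x^2 - f_L(x) = p_L(x)^2$. The bound is then immediate from the nesting property: $0 \le p_L(x) \le 2^{-L}$ yields $p_L(x)^2 \le 2^{-2L}$.

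The main obstacle is purely the index bookkeeping in the telescoping sum: the correction $\eta_j$ must be paired with the argument $p_{j-1}(x)$ that actually feeds the $j$-th fold (with the convention $p_0(x)=x$), not with $p_j(x)$, since it is consecutive squares $p_{j-1}^2$ and $p_j^2$ that must cancel. Tracing the recursion in Algorithm~\ref{alg:square} confirms that each returned correction term is indeed $\eta_j$ evaluated at the input $p_{j-1}(x)$ to the $j$-th level. Once this pairing is fixed, the identity is a one-line consequence of the fold relation and the rest is the induction above, so I expect no analytic difficulty — all of the content sits in the exact cancellation rather than in any inequality.
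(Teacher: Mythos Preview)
Your proposal is correct and follows essentially the same telescoping argument as the paper: apply the exact single-fold identity $z^2 = h_j(z)^2 + \eta_j(z)$ a total of $L$ times and use $p_L(x)\in[0,2^{-L}]$ for the bound. You are in fact more careful than the paper about the index pairing---correctly evaluating $\eta_j$ at $p_{j-1}(x)$ (with $p_0(x)=x$), which matches what Algorithm~\ref{alg:square} actually returns; the paper's displayed sum $\sum_j \eta_j(p_j(x))$ appears to carry an off-by-one slip, since $p_j(x)\in[0,2^{-j}]$ would force every $\eta_j(p_j(x))$ to vanish.
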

\begin{proof}
 The unfolding expression~\eqref{eq:square_correction} is exact. Using it $L$ times we rewrite $x^2$ as
 \[
 x^2 = p_L(x)^2 + \sum_{j=1}^L \eta_j(p_j(x)) = p_L(x)^2 + f_L(x).
 \]
For the bound we note that $p_L(x) \in [0,2^{-L}]$ for $x \in [0,1]$.
\end{proof}

\begin{remark}\label{rem:init}
Since the approximation is additive, the initialisation on $[0,2^{-L}]$ plays only a minor role. For a given initial function $u_{\textrm{init}}(x)$ we obtain the modified approximation
\[
\tilde{f}_L(x) = f_L(x) + u_{\textrm{init}}(p_L(x)).
\]
We recover precisely the approximation of Telgarsky with the choice
\[
u_{\textrm{init}}(x) = 2^{-L} \, x.
\]
Indeed, this initialization interpolates $x^2$ in both endpoints $0$ and $2^{-L}$ of $[0,2^{-L}]$ and hence, by the folding construction, in any of the dyadic points $k\, 2^{-L}$, $k=0,\ldots,2^L$, on $[0,1]$.
\end{remark}

\begin{corollary}
 The approximation error of Algorithm~\ref{alg:square} for $x \in [0,1]$ with initial approximation $u_{\textrm{init}}(x)$ for $x \in [0,2^{-L}]$ is 
\[
 x^2 - \tilde{f}_L(x) = p_L(x)^2 - u_{\textrm{init}}(p_L(x)).
\]
\end{corollary}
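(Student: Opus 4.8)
The plan is to obtain the corollary directly from the preceding Theorem together with the expression for $\tilde{f}_L$ recorded in Remark~\ref{rem:init}, so that almost all of the substantive work has already been done. The conceptual point is that the folding construction is \emph{additive}: the correction terms $\eta_j(p_j(x))$ assembled into $f_L(x)$ in~\eqref{eq:square_approx} are fixed once the foldings are chosen and do not depend on how one approximates $x^2$ on the base interval $[0,2^{-L}]$. Replacing the implicit initialization by zero with a general function $u_{\textrm{init}}$ therefore changes the network output only by the single extra term $u_{\textrm{init}}(p_L(x))$, which is precisely the content of Remark~\ref{rem:init}.

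First I would recall the exact error identity established in the Theorem,
\[
x^2 - f_L(x) = p_L(x)^2,
\]
valid for every $x \in [0,1]$, which itself follows from applying the unfolding relation~\eqref{eq:square_correction} a total of $L$ times. Next I would substitute the definition $\tilde{f}_L(x) = f_L(x) + u_{\textrm{init}}(p_L(x))$ into the modified error and regroup:
\[
x^2 - \tilde{f}_L(x) = \bigl(x^2 - f_L(x)\bigr) - u_{\textrm{init}}(p_L(x)) = p_L(x)^2 - u_{\textrm{init}}(p_L(x)).
\]
This is exactly the asserted formula, so no further estimation is required.

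There is essentially no analytic obstacle here, and the proof reduces to the one-line algebraic manipulation above. The only point that deserves a sentence of justification is the additive decomposition underlying Remark~\ref{rem:init}: after $L$ foldings every $x \in [0,1]$ is mapped to $p_L(x) \in [0,2^{-L}]$, and the approximation of $x^2$ on the remaining base interval enters the total output only through composition with $p_L$. Since the folding maps $h_j$ and the correction terms $\eta_j$ are independent of the base approximation, this composition term is simply $u_{\textrm{init}}(p_L(x))$, and once this is granted the corollary is immediate.
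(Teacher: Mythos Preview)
Your proposal is correct and matches the paper's approach: the corollary is stated without proof in the paper, since it follows immediately from the theorem's identity $x^2 - f_L(x) = p_L(x)^2$ together with the definition $\tilde{f}_L(x) = f_L(x) + u_{\textrm{init}}(p_L(x))$ in Remark~\ref{rem:init}, exactly as you describe.
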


Amazingly, the single if-statement in Algorithm~\ref{alg:square} is responsible for exponentially many pieces in the result - such is the power of recursion. The errors are illustrated in Fig.~\ref{fig:square}. We see in the left panel that the initialisation following Telgarsky and Remark~\ref{rem:init} is slightly more accurate. In the right panel we see that the bound for the version with zero initialisation is sharp. Machine precision is reached for $L=11$ in single precision and $L=26$ in double precision floating point numbers.

\subsection{Generalizations of the folding operation}

The geometry of Algorithm~\ref{alg:square} is that it zooms in on zero using geometric grading. For univariate functions, an alternative way to zoom in on zero is to use the reflection map
\[
 \hat{h}_j(x) = | x - 2^{-j}|.
\]
This map too is continuous and piecewise linear. It has the property that it maps $[0,2^{-j+1}]$ onto $[0,2^{-j}]$, in this case by reflection with respect to $x=0$. Starting with $x\in [0,1]$ and $j=1$, repeated application of the map results in convergence to $0$. Correspondingly, relating $f(x)$ to $f(\hat{h}_j(x))$ and reasoning as above, we found approximations to $x^2$ using this map to be feasible. However, this approach does not seem to offer any advantages over the folding approach. In particular, one issue is that the error does not decrease monotonically, since $\hat{h}_j(x)$ might be larger than $x$ for some combinations of $x$ and $j$.

A more obvious generalization lies in multivariate folds. One can fold $\mathbb{R}^d$ across a hyperplane $\sum_{k=1}^d a_k x_k$. One can imagine various geometric ways in which a domain can be folded onto smaller and smaller domains, zooming in on a point such as the origin. Indeed, that is precisely the approach considered by McCane et al in~\cite{mccane2018radiallysymmetric}. They describe an approximation scheme for radially symmetric functions. These are functions of the form $f(\Vert \mathbf{x} \Vert)$, i.e., they are dependent only on the Euclidean norm of $\mathbf{x}$. Their construction in 2D folds the Euclidean space along lines at an angle of $2\pi / 2^j$ with the X-axis. This folds a disk into smaller and smaller sectors, as illustrated well by~\cite[Fig. 1(a)]{mccane2018radiallysymmetric}. Thus, eventually any vector $\mathbf{x} = (x_1,x_2)$ is mapped to a point exponentially close to $(\Vert\mathbf{x} \Vert,0)$ on the X-axis. A function $f$, which is itself realized by a ReLU network, can then be applied to the first component.

The case of multivariate folds lies closer to that of deep and dense ReLU networks than the case of univariate folds. Hence, in the quest for understanding of such networks, multivariate folds deserve more attention. However, that is not the goal of this paper. Our current focus lies with the generality of the class of algorithms that can be implemented with a ReLU network. It is clear that foldings can be generalized. As we will see, the factor that limits expressivity in our current approach is the unfolding operation. That is what we consider next.

\subsection{Generalizations of the unfolding operation}\label{ss:unfolding}

From a technical perspective there are two reasons why folding and unfolding works for $x^2$:
\begin{enumerate}
    \item The shifted function $f(a-x)$ can be related to $f(x)$ itself and vice-versa. That is what we used in~\eqref{eq:square_relate} for $a=1$ and in~\eqref{eq:square_correction} for $a = 2^{-j+1}$.
    \item In addition, the difference between $f(x)$ and $f(a-x)$ is a linear function of $x$.
\end{enumerate}
The second condition is much more restrictive than the first. The major aim in the remainder of this paper is to lift this restriction.

Several functions can be related to shifts of itself or of related functions. For example, we have the trigonometric sum identities
\begin{align}\label{eq:sincos_shift}
 \cos(a-x) &= \cos a \cos x + \sin a \sin b \\
 \sin(a-x) &= \sin a \cos x - \cos a \sin x.\nonumber
\end{align}
These identities suggest that $\cos$ and $\sin$ might be approximated simultaneously in a geometric folding approach. That would be an interesting new special case, since the ability of a neural network to approximate oscillatory functions plays a key role in the current proofs of expressiveness of deep networks~\cite{telgarsky2015representation} and in their ability to generate a basis~\cite{daubechies2022nonlinear,schneider2024rieszbasis}.
Moreover, approximation of trigonometric functions implies the possibility of neural networks to accurately represent (sparse) Fourier series.

\begin{lstlisting}[float=t,caption={Recursive approximation to $e^x$ and $e^{-x}$ on the interval {$\left[ 0,1\right]$}.}, label={alg:exp}]
function exp_approx(x, j = 1, L = 20)
    if j == L+1
        return [1.0 + x; 1.0 - x]
    end
    pivot = 1/2^j
    z = exp_approx(pivot - abs(x-pivot), j+1, L)
    if x > pivot
        return [exp(2*pivot)*z[2]; exp(-2*pivot)*z[1]]
    else
        return z
    end
end
\end{lstlisting}

\begin{lstlisting}[float=t,caption={Recursive approximation to $\cos(x)$ and $\sin(x)$ on the interval {$\left[0,\pi\right]$}.}, label={alg:sincos}]
function sincos_approx(x, j = 1, L = 10)
    if j == L+1
        return [1.0; x]
    end
    pivot = 1/2^j * pi
    z = sincos_approx(pivot - abs(x-pivot), j+1, L)
    if x > pivot
        return [cos(2*pivot) sin(2*pivot); sin(2*pivot) -cos(2*pivot)] * z
    else
        return z
    end
end
\end{lstlisting}

Another case of known shift identities is:
\begin{align}\label{eq:exp_shift}
 e^{a -x} &= e^a e^{-x} \\
 e^{-(a-x)} &= e^{-a} e^x.\nonumber
\end{align}
Again, there are two functions $f$ and $g$ and the shift identities lead to expressions for the folds $f(h_j(x))$ and $g(h_j(x))$ in terms of $f(x)$ and $g(x)$. To find the unfolding operation, these expressions have to be inverted. For the exponential functions we readily find
\begin{align}\label{eq:exp_unfold}
 e^x &= e^{a} e^{-(a-x)} \\
 e^{-x} &= e^{-a} e^{a-x}.\nonumber
\end{align}
On the interval $[0,2^{-L}]$ we can approximate $e^x$ and $e^{-x}$ by their first order Taylor series $1+x$ and $1-x$ respectively. This leads to Algorithm~\ref{alg:exp}. Recall that the intended value of $a$ is twice the pivot.

A similar algorithm can be made for the trigonometric functions by inverting~\eqref{eq:sincos_shift} and using the Taylor series approximations $\cos(x) \approx 1$ and $\sin x \approx x$ on the smallest interval. This leads to Algorithm~\ref{alg:sincos}. Note that we have opted here to approximate the functions on $\left[0,\pi\right]$, which causes the pivots to be $\pi$ times larger. We return to this approximation in \S\ref{ss:trig} later on.

\subsection{Structural versus analytical continuity}\label{ss:continuity}

Unfortunately, neither of the two algorithms for exponentials and trigonometric functions can be realized as a neural network with our current set of techniques. We will develop new techniques to do so, but first we aim to understand exactly what is the problem.

The structure of both algorithms is the same as that of Algorithm~\ref{alg:square}. However, the if-then-else statement is more general. When unfolding $x^2$ the correction term~\eqref{eq:square_correction} was continuous and piecewise linear: translation into a ReLU formulation was immediate. Here, it is clear that the result of the algorithm should be a continuous function, since we know which functions we are approximating, but continuity does not follow from the implementation itself. It is not structural, it is only analytical.

Consider the case of trigonometric functions. At step $j$ in the algorithm, the inputs are $x$ and the vector $z$, which has two components given by
\[
z =\left[\begin{array}{cc} z_1 \\ z_2
\end{array}\right] = \left[\begin{array}{cc} \cos(h_j(x)) \\ \sin(h_j(x))
\end{array}\right].
\]
There are two outputs, $\cos(x)$ and $\sin(x)$. For $\cos(x)$, we aim to realize an expression of the form
\begin{verbatim}
    if x > pivot
        A * z1 + B * z2
    else
        C * z1 + D * z2
    end
\end{verbatim}
with $A = \cos(2^{-j+1} \pi)$, $B=\sin(2^{-j+1}\pi)$, $C=1$ and $D=0$.

Neural networks can make linear combinations of inputs, but these are fixed and unconditional. The coefficients of a linear combination are stored in the form of weights in a weight matrix. However, in the expression above the coefficients differ depending on $x$: we either use $A$ and $B$, or $C$ and $D$. The problem arises because the statement itself is not continuous in $x$. Of course we know in this case that if $x$ equals the pivot and $z_1$ and $z_2$ are as above, it is true that
\[
A z_1 + B z_2 = C z_1 + D z_2.
\]
Yet, this equality is not true for any combination of $z_1$ and $z_2$.

One way to proceed is to relax the problem. Indeed, it is sufficient to construct a neural network that agrees with the conditional expression in the algorithm whenever the values in $z_1$ and $z_2$ actually correspond to the folds of $\cos$ and $\sin$. Given other inputs, the network may produce output that is unrelated to trigonometric functions. That is not a problem since, if all goes well in the preceding layers, it will never happen in practice. We will consider conditional expressions formally in~\S\ref{s:branching}, and then return to concrete examples in~\S\ref{s:examples}.

\section{Conditional branching in a ReLU network}\label{s:branching}

Consider a layer of a neural network that has three inputs. Assuming that the input of the network itself is $x$, each input is a function of $x$. We denote the three inputs by $a(x)$, $b(x)$ and $c(x)$. The question we address in this section is to what extent we can realize the conditional statement
\begin{verbatim}
    if a(x) >= 0 then
        b(x)
    else
        c(x)
    end
\end{verbatim}

We have already noted that realizing this expression with ReLU networks is impossible unless it is continuous. The branches $b(x)$ and $c(x)$ should agree for any critical value of $x$ for which $a(x)=0$.

\subsection{A few simplifying assumptions}

We will for simplicity assume that $x$ is scalar. If the network has high-dimensional input, we can still consider branching based on one element of the input. Furthermore, we will assume that $x$ is restricted to an interval $[x_0,x_1]$. Thus, the continuity condition can be stated as:
\begin{equation}\label{eq:continuity}
\forall x \in [x_0,x_1]: a(x) = 0 \Rightarrow b(x) = c(x).
\end{equation}

We will make the further assumption that there is only one critical value $x^*$ in the interval $[x_0,x_1]$:
\[
\forall x \in [x_0,x_1]: a(x) = 0 \Rightarrow x = x^*.
\]
If there are several isolated critical values of $x$, one can always subdivide the interval $[x_0,x_1]$. With this assumption, \eqref{eq:continuity} becomes
\[
b(x^*) = c(x^*).
\]

Finally, we will assume that $a(x)$ changes sign at $x^*$. Thus, it is either positive on one side and negative on the other, or vice-versa. This is not a real restriction, since if $a(x)$ is positive on both sides the condition $a(x) \geq 0$ is always satisfied and the result is simply $b(x)$. Similarly, if $a(x)$ is negative the outcome is $c(x)$.

\subsection{A special case and an assumption about signs}\label{ss:concatenation}

We start with a simpler setting involving just two input functions along with $x$ itself. We consider the problem of concatenating two functions with different supports in $x$. Let's say one function $b(x)$ is defined for $x\geq0$ and another function $c(x)$ for $x < 0$. Their concatenation corresponds to the expression
\begin{verbatim}
    if x >= 0
        b(x)
    else
        c(x)
    end
\end{verbatim}
with $b(0)=c(0)$ for continuity.

One way to realize the outcome is via a linear combination of the two input functions with fixed weights. To that end we aim to extend the functions outside their support in a suitable way. Define
\begin{equation}\label{eq:bc_extension}
\tilde{b}(x) = \left\{ \begin{array}{cc}
0, & x < 0 \\
b(x)-b(0), & x \geq 0
\end{array}\right. \quad \mbox{and} \quad \tilde{c}(x) = \left\{ \begin{array}{cc}
c(x)-c(0), & x < 0 \\
0, & x \geq 0.
\end{array}\right.
\end{equation}
In this case the outcome of the conditional statement is the sum
\begin{equation}\label{eq:stitching}
z(x) = b(0) + \tilde{b}(x) + \tilde{c}(x).
\end{equation}
Only one of the two functions $\tilde{b}(x)$ and $\tilde{c}(x)$ is non-zero for any given value of $x$. This corresponds exactly to one of the two branches being taken in the conditional statement.

\begin{remark}
    Note that in~\eqref{eq:stitching} we have chosen to add the constant $b(0)$. We could also have chosen $c(0)$. In so doing, we assume that the continuity condition $b(0)=c(0)$ holds, but it is not enforced. As a consequence, \eqref{eq:continuity} agrees with the conditional statement for all $x$ if $b(0)=c(0)$ holds, but it may also produce a different outcome if the condition is violated. The network output will be continuous even if the concatenation of $b$ and $c$ is not.
\end{remark}

We can realize $\tilde{b}(x)$ and $\tilde{c}(x)$ with ReLU by making further analytical assumptions. Say $b(x)$ and $c(x)$ are actually defined for all $x$. The function $\hat{b}(x) = b(x)-b(0)$ vanishes at $x=0$. If $\hat{b}(x)$ changes sign at $x=0$ and is positive for positive $x$ then we can realize the extension as
\[
\tilde{b}(x) = \ReLU(\hat{b}(x)) = \ReLU(b(x)-b(0)).
\]
If $\hat{b}(x)$ changes sign and is negative for $x \geq 0$ then we have
\[
\tilde{b}(x) = \ReLU(-\hat{b}(x)) = \ReLU(-b(x)+b(0)).
\]
We can make a similar assumption for $c(x)$ which we formalize in Assumption~\ref{as:assumption1}.
\begin{assumption}[sign-reversal]\label{as:assumption1}
 The two functions $b$ and $c$ are such that $b(0)=c(0)$, and $b(x)-b(0)$ and $c(x)-c(0)$ both change signs at $x=0$.
\end{assumption}

In case the functions satisfy Assumption~\ref{as:assumption1} and both are positive for $x \geq 0$ we conclude that~\eqref{eq:stitching} is equivalent to
\begin{equation}\label{eq:stitching_relu}
 z(x) = b(0) + \ReLU(b(x)-b(0)) + \ReLU(c(x)-c(0)).
\end{equation}
Similar expressions can be found depending on the signs of $\hat{b}$ and $\hat{c}$ on either side of $x=0$.

Unfortunately, Assumption~\ref{as:assumption1} is restrictive for the class of problems we are considering. It is an entirely valid scenario in which $\hat{b}$ or $\hat{c}$ vanish at $0$ without changing sign. In that case, we need a more elaborate technique.

\subsection{The general case and an assumption about relative sizes}\label{ss:generalcase}

We go back to the general conditional statement involving three inputs $a(x)$, $b(x)$ and $c(x)$. One property of ReLU is that its output can only equal its input or zero. Thus, $\ReLU(a(x))$ can never produce $b(x)$. However, we have seen in~\S\ref{s:sorting} that it is possible to interchange inputs based on their relative sizes, since that operation is continuous. Thus, one can realize one of two possible different outcomes by sorting two inputs. We are led to consider the relative sizes of the inputs.

We develop this idea further. Define as in~\S\ref{ss:concatenation} the normalized functions
\[
\hat{b}(x) = b(x) - b(x^*) \quad \mbox{and} \quad \hat{c}(x) = c(x)-c(x^*).
\]
They are normalized in the sense that $a(x^*)=\hat{b}(x^*)=\hat{c}(x^*)=0$. We can still think of the general conditional expression as concatenating two functions, similar to~\S\ref{ss:concatenation}, since we have assumed that $a(x)$ changes sign on $[x_0,x_1]$ exactly one. Thus, for $x \in [x_0,x_1]$, by defining the extensions
\begin{equation}\label{eq:bc_extension_general}
\tilde{b}(x) = \left\{ \begin{array}{cc}
0, & a(x) < 0 \\
\hat{b}(x), & a(x) \geq 0
\end{array}\right. \quad \mbox{and} \quad \tilde{c}(x) = \left\{ \begin{array}{cc}
\hat{c}(x), & a(x) < 0 \\
0, & a(x) \geq 0,
\end{array}\right.
\end{equation}
we can write the outcome as
\begin{equation}\label{eq:stitching_general}
z(x) = b(0) + \tilde{b}(x) + \tilde{c}(x).
\end{equation}
This is almost the same as~\eqref{eq:stitching}. However, the condition $a(x) \geq 0$ corresponds to $x \geq x^*$ only if $a(x)$ is positive for $x > x^*$. It corresponds to $x < x^*$ in the other case, in which case compared to~\S\ref{ss:concatenation} we also stitch the functions together in reverse order.

In order to make statements about the sizes of $\hat{b}(x)$ and $\hat{c}(x)$ relative to $a(x)$, we define a notion of crossing functions as follows.

\begin{definition}\label{def:crossing}
 We say that two continuous functions $f$ and $g$ are crossing at $x^*$ if $f(x)-g(x)$ changes sign at $x^*$.
\end{definition}
This definition is illustrated in Fig.~\ref{fig:crossing} in the setting we like to use. The notion of crossing functions allows a weaker assumption than Assumption~\ref{as:assumption1} before. As shown in the figure, it is not necessary for $\hat{b}(x)$ to change sign at $x^*$. In addition, we can allow for some more flexibility by introducing a constant. This constant will have to be determined analytically later on.

\begin{figure}
    \begin{center}
    \includegraphics[width=0.8\linewidth]{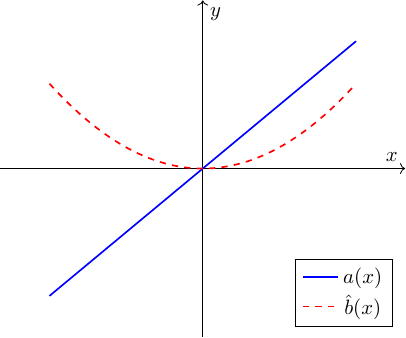}
    \end{center}
\caption{Illustration of two crossing functions following Definition~\ref{def:crossing}. The function $a$ changes sign at $x^*=0$ and $\hat{b}$ does not, but they are such that $a(x) > \hat{b}(x)$ for $x>0$ and $a(x)<\hat{b}(x)$ for $x<0$.}\label{fig:crossing}
\end{figure}

\begin{assumption}[crossing assumption]\label{as:assumption2}
There exists a constant $\beta > 0$ such that $\hat{b}(x)$ and $\beta a(x)$ are crossing functions, and a constant $\gamma  > 0$ such that $\hat{c}(x)$ and $\gamma  a(x)$ are also crossing functions.
\end{assumption}

Assumption~\ref{as:assumption2} allows us to reformulate the conditional expression as a fixed linear combination of ReLU function calls. To see how, let us consider the case where $a(x) > 0$ for $x > x^*$, and where $\beta a(x)-\hat{b}$ and $\gamma  a(x)-\hat{c}(x)$ are positive for $x > x^*$ as well. One such case is shown in Fig.~\ref{fig:crossing} for $x^*=0$ and $\beta=1$. In Table~\ref{tab:logical} we show the outputs of ReLU for several different inputs. The question becomes whether we can realize the desired outcome, shown in the last row, by taking a linear combination of the other rows with fixed constants.

\begin{table}[ht]
\centering
\begin{tabular}{c|c|c}
 & $x < x^*$ & $x > x^*$\\
\hline
$\ReLU\left(a(x)\right)$  & $0$ &  $a(x)$  \\
$\ReLU\left(-a(x)\right)$  & $-a(x)$ &  $0$  \\
$\ReLU\left(\beta a(x) - \hat{b}(x)\right)$  & $0$ &  $\beta a(x) - \hat{b}(x)$  \\
$\ReLU\left(-\beta a(x) + \hat{b}(x)\right)$  & $-\beta a(x) + \hat{b}(x)$ &  $0$ \\
$\ReLU\left(\gamma a(x) - \hat{c}(x)\right)$  & $0$ &  $\gamma a(x) - \hat{c}(x)$  \\
$\ReLU\left(-\gamma a(x) + \hat{c}(x)\right)$  & $-\gamma a(x) + \hat{c}(x)$ &  $0$ \\ \hline 
\texttt{if} $a(x) > 0$  \texttt{then} $b(x)$ \texttt{else} $c(x)$ & $c(x)$ & $b(x)$
\end{tabular}
\caption{Logical table that shows the outcome of several ReLU expressions for $x < x^*$ and $x>x^*$ respectively, in the case where $a$, $\beta a-\hat{b}$ and $\gamma a-\hat{c}$ are positive for $x>x^*$. The last line shows the desired outcome of the conditional expression.}
\label{tab:logical}
\end{table}

To that end we first note that the extended function $\tilde{b}(x)$ as defined by~\eqref{eq:bc_extension_general} can be written as
\[
 \tilde{b}(x) = - \ReLU\left(\beta a(x) - \hat{b}(x)\right) + \beta  \, \ReLU\left(a(x)\right).
\]
Indeed, for $x < x^*$ this expression results in $0$, while for $x > x^*$ we obtain $\hat{b}(x) = b(x)-b(0)$. The corresponding expression for $\tilde{c}$ is
\[
 \tilde{c}(x) = \ReLU\left(-\beta a(x) + \hat{c}(x)\right) - \beta \ReLU\left(-a(x)\right).
\]
This expression evaluates to $0$ for $x > x^*$, and to $\hat{c}(x)$ for $x < x^*$. Taken together, we find that
\begin{align}\label{eq:generalcase_result}
    z(x) &= b(0) + \tilde{b}(x) + \tilde{c}(x) \nonumber \\ 
    &= b(0) - \ReLU\left(\beta a(x) - \hat{b}(x)\right) + \beta  \, \ReLU\left(a(x)\right) \\
    & \quad + \ReLU\left(-\gamma a(x) + \hat{c}(x)\right) - \gamma  \, \ReLU\left(-a(x)\right). \nonumber
\end{align}
This expression results in $b(x)$ for $x>x^*$ and in $c(x)$ for $x < x^*$. Similar expressions can be found for the other cases, depending on the signs of $\beta  a(x) - \tilde{b}(x)$ and $\gamma  a(x) - \tilde{c}(x)$ for $x > x^*$.

Subject to Assumption~\ref{as:assumption2}, we can realize any conditional statement in terms of ReLU. The main practical issue in applying~\eqref{eq:generalcase_result} lies in the explicit identification of the constants $\beta$ and $\gamma$. This step requires analytical knowledge. However, the assumption that such constants exist is in itself not very restrictive. Since we assume a bounded interval for $x$, it suffices for the functions involved to be Lipschitz continuous on that interval. Assumption~\ref{as:assumption2} is quite generic, and one could even imagine deducing the parameters automatically based on function samples. In contrast, Assumption~\ref{as:assumption1} poses a genuine restriction in practice.

\section{Algorithmic neural networks through examples}\label{s:examples}

We use the methodology of~\S\ref{s:branching} to construct deep neural networks for a number of special functions. The networks in this section have a similar structure:
\begin{itemize}
\item $L$ layers apply a folding operation, 
\item $L$ subsequent layers apply an unfolding operation.
\end{itemize}
One difference compared to the case of $x^2$ is that we have to decide between branches based on the value of $x$ compared to the current pivot. This means that the value of $p_j(x)$, which is computed in the $j$-th folding operation, should remain available for the $j$-th unfolding operation. This can be achieved by skipping connections, or as before by using two neurons for each quantity to be maintained across layers based on the identity $f(x)=\ReLU(x) - \ReLU(-x)$. Since there are $L$ such quantities, the networks have a width of approximately $2L$.

\subsection{The exponential function}

\begin{figure}[t]
  \centering

  \begin{subfigure}[b]{0.45\textwidth}
    \centering
    \includegraphics[width=\textwidth]{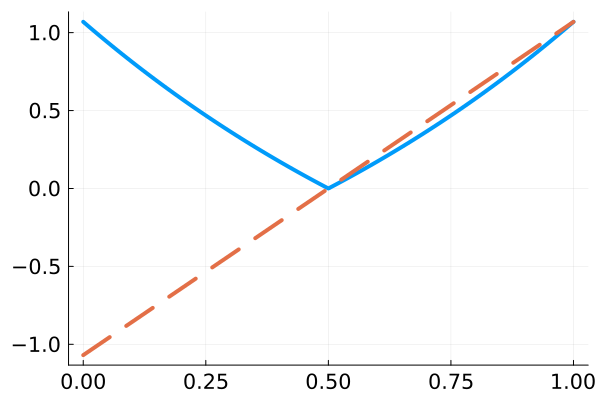}
    \caption{$\hat{b}_1(x)$ and $\beta_1 a(x)$}
  \end{subfigure}
  \hfill
  \begin{subfigure}[b]{0.45\textwidth}
    \centering
    \includegraphics[width=\textwidth]{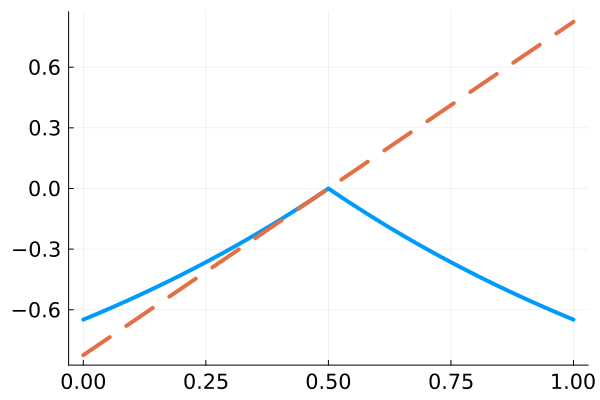}
    \caption{$\hat{c}_1(x)$ and $\gamma_1 a(x)$}
  \end{subfigure}

  \begin{subfigure}[b]{0.45\textwidth}
    \centering
    \includegraphics[width=\textwidth]{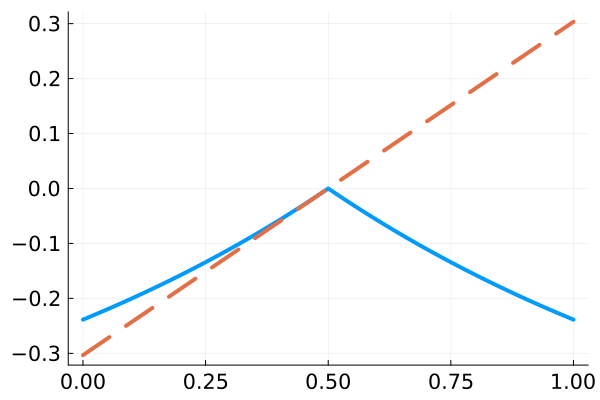}
    \caption{$\hat{b}_2(x)$ and $\beta_2 a(x)$}
  \end{subfigure}
  \hfill
  \begin{subfigure}[b]{0.45\textwidth}
    \centering
    \includegraphics[width=\textwidth]{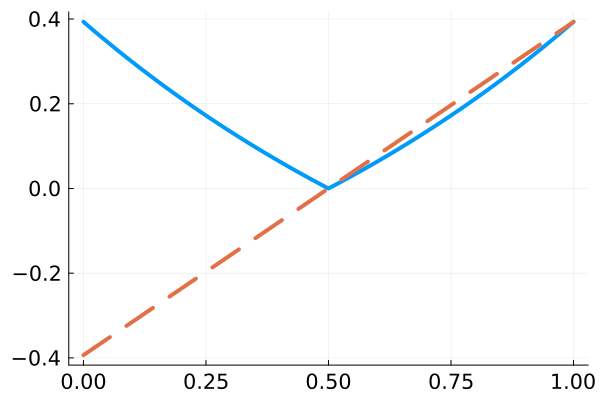}
    \caption{$\hat{c}_2(x)$ and $\gamma_2 a(x)$}
  \end{subfigure}

  \caption{Illustration of the quantities in the conditional expression for Algorithm~\ref{alg:exp}. In each case, the functions $\hat{b}(x)$ or $\hat{c}(x)$ (solid line) are greater than a multiple of $a(x)$ (dashed line) on one side of the pivot, and smaller on the other side. This property is used to realize $\tilde{c}(x)$ and $\tilde{b}(x)$ using ReLU.}\label{fig:conditional_exp}
\end{figure}

We revisit Algorithm~\ref{alg:exp}. Note that we can write the absolute value function as
\[
|x| = \ReLU(x) + \ReLU(-x).
\]
The application of $h_j$ involves one absolute value function. Denoting the pivot by $s_j = 2^{-j}$, we obtain
\[
h_j(x) = s_j - |x - s_j| = s_j - \ReLU(x - s_j) - \ReLU(s_j - x).
\]
Furthermore, let $z(x) =\left[\begin{array}{cc} z_1(x) \\ z_2(x)
\end{array}\right]$ denote the outcome of the recursive function call. By induction, we can assume that
\[
z_1(x) = e^{h_j(x)} = e^{s_j - | x - s_j|} \quad \mbox{and} \quad z_2(x) = e^{-h_j(x)} = e^{-s_j + | x - s_j|}.
\]

Our goal is to realize the conditional expression at step $j$ in terms of ReLU. The statement has a vector-valued result with two entries. Thus, there are really two conditional statements. Identifying with the form of~\S\ref{s:branching}, the first conditional expression involves the functions
\begin{align*}
a(x) &= x-s_j, \\
b_1(x) &= e^{2 s_j} z_2(x),\\
c_1(x) &= z_1(x).
\end{align*}
We have $x^*=s_j$ and $b_1(x^*)=c_1(x^*)=e^{s_j}$. The second conditional expression has the same condition $a(x) > 0$ and branches given by
\begin{align*}
b_2(x) &= e^{-2s_j} z_1(x),\\
c_2(x) &= z_2(x).
\end{align*}

Consider the first case. By definition we have $\hat{b}_1(x) = b(x) - e^{s_j}$ and $\hat{c}_1(x) = c(x)-e^{s_j}$. These functions are shown in the top row of Fig.~\ref{fig:conditional_exp}. In order to realize the statement with ReLU's, we need to determine constants $\beta_1$ and $\gamma_1$ such that Assumption~\ref{as:assumption2} holds. For $\tilde{c}_1(x)$, shown in Fig.~\ref{fig:conditional_exp}(b), we can use the tangent line of the exponential at the pivot, which leads to
\[
\gamma_1 = e^{s_j}.
\]
For $\tilde{b}_1(x)$, shown in Fig.~\ref{fig:conditional_exp}(a), we have to take into account that the exponential achieves its maximum in the point $x=1$. By direct computation, we find the slope of the line connecting the value at $x=\frac12$ to the value at $x=1$ to be
\[
\beta_1 = \frac{e^{2s_j} - e^{s_j}}{s_j}.
\]
For the second case, we find by similar reasoning the values
\[
\beta_2 = e^{-s_j} \quad \mbox{and} \quad \gamma_2 = \frac{1-e^{-s_j}}{s_j}.
\]
We can now use~\eqref{eq:generalcase_result} two times to realize the algorithm.

\begin{figure}[t]
  \centering

  \begin{subfigure}[b]{0.45\textwidth}
    \centering
    \includegraphics[width=\textwidth]{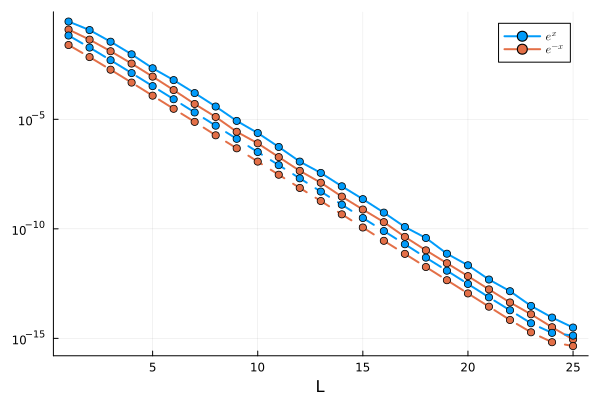}
  \end{subfigure}
  \hfill
  \begin{subfigure}[b]{0.45\textwidth}
    \centering
    \includegraphics[width=\textwidth]{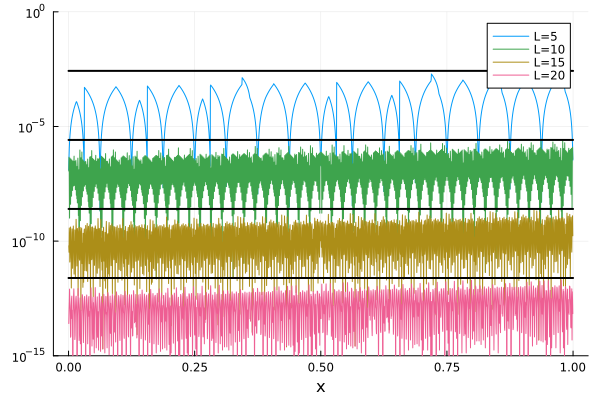}
  \end{subfigure}

  \caption{Approximation error of Algorithm~\ref{alg:exp}. Left: maximum error on $[0,1]$ for increasing $L$ for the functions $e^x$ and $e^{-x}$ using Taylor series initialization (solid) or interpolation (dashed). Right: pointwise error on $[0,1]$ for different values of $L$. The solid black lines show the values $e^1 \,2^{-2L}$.}
  \label{fig:exp_error}
\end{figure}

For completeness, we note that the analysis of the algorithm remains straightforward. The main source of error is the approximation on the smallest interval $[0,2^{-L}]$. Approximation by the first order Taylor series carries an error that is quadratic. Hence, this error scales as $2^{-2L}$. Instead of a Taylor series one can also choose the initial approximation to be interpolating in both endpoints of $[0,2^{-L}]$. Explicitly, this is achieved by the linear expression
\[
f(x) \approx f(0) + \left[f(2^{-L}) - f(0)\right] \frac{x}{2^{-L}}, \qquad x \in [0,2^{-L}].
\]
The associated error is also quadratic. Furthermore, in this case the result of Algorithm~\ref{alg:exp} and of other similar algorithms interpolates $f(x)$ at all dyadic points, as was the case for $x^2$.

The errors in the approximation of the exponential functions are shown in Fig.~\ref{fig:exp_error}. We see that, again, the interpolating variant is slightly more accurate. Both variants achieve a maximum error that scales as $2^{-2L}$, as shown in the right panel.

\subsection{Trigonometric functions}\label{ss:trig}

\begin{figure}[t]
  \centering

  \begin{subfigure}[b]{0.45\textwidth}
    \centering
    \includegraphics[width=\textwidth]{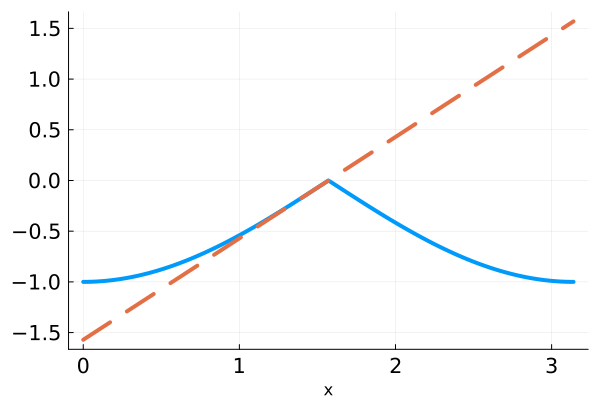}
    \caption{$\hat{b}_1(x)$}
  \end{subfigure}
  \hfill
  \begin{subfigure}[b]{0.45\textwidth}
    \centering
    \includegraphics[width=\textwidth]{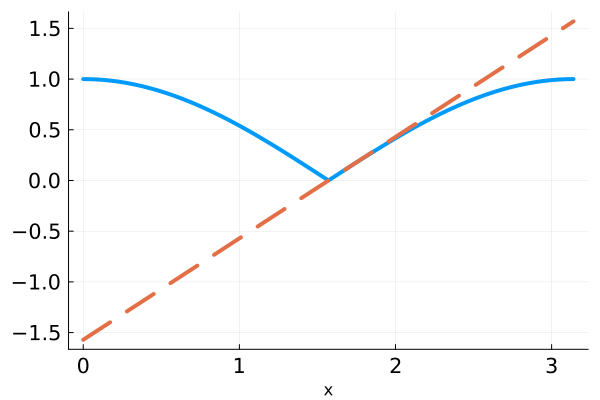}
    \caption{$\hat{c}_1(x)$}
  \end{subfigure}

  \begin{subfigure}[b]{0.45\textwidth}
    \centering
    \includegraphics[width=\textwidth]{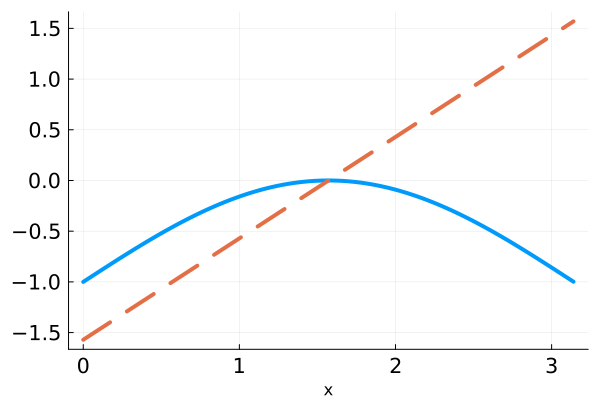}
    \caption{$\hat{b}_2(x)$}
  \end{subfigure}
  \hfill
  \begin{subfigure}[b]{0.45\textwidth}
    \centering
    \includegraphics[width=\textwidth]{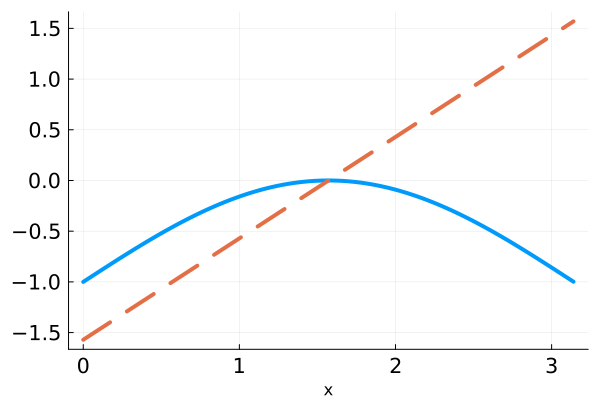}
    \caption{$\hat{c}_2(x)$}
  \end{subfigure}

  \caption{This figure is the analogue of Fig.~\ref{fig:conditional_exp} for the case of Algorithm~\ref{alg:sincos}. In each case, the functions $\hat{b}(x)$ and $\hat{c}(x)$ (solid) cross with the line $x$ (dashed). This implies that we can choose the constants $\beta_1=\gamma_1=\beta_2=\gamma_2=1$. The functions are shown for level $j=1$ on the interval $[0,\pi]$.}\label{fig:conditional_trig}
\end{figure}

The implementation of Algorithm~\ref{alg:sincos} in terms of ReLU is entirely analogous to that of Algorithm~\ref{alg:exp}. Here we can choose the constants $\beta_1=\gamma_1=\beta_2=\gamma_2=1$, as illustrated in Fig.~\ref{fig:conditional_trig} for the first level.

The approximation errors are shown in Fig.~\ref{fig:trig_error}. The errors for $\cos(x)$ and $\sin(x)$ are almost identical, with interpolating initialization on $[0,\pi 2^{-L}]$ again presenting a small edge. In the right panel, we see that the approximation of $\sin(x)$ is more accurate near the endpoints and in the middle. That is because the sine function vanishes and in addition obeys odd symmetry: the local approximation error is cubic rather than quadratic.

\begin{figure}[t]
  \centering

  \begin{subfigure}[b]{0.45\textwidth}
    \centering
    \includegraphics[width=\textwidth]{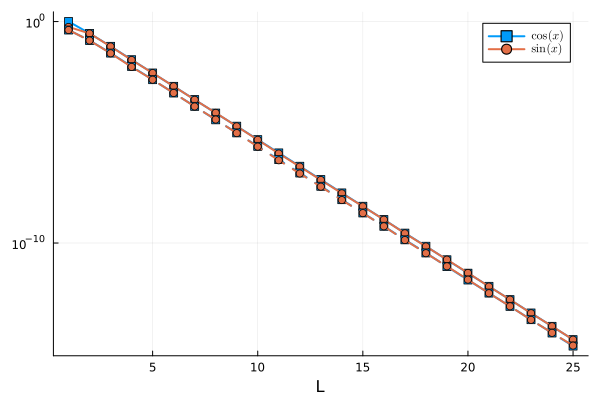}
  \end{subfigure}
  \hfill
  \begin{subfigure}[b]{0.45\textwidth}
    \centering
    \includegraphics[width=\textwidth]{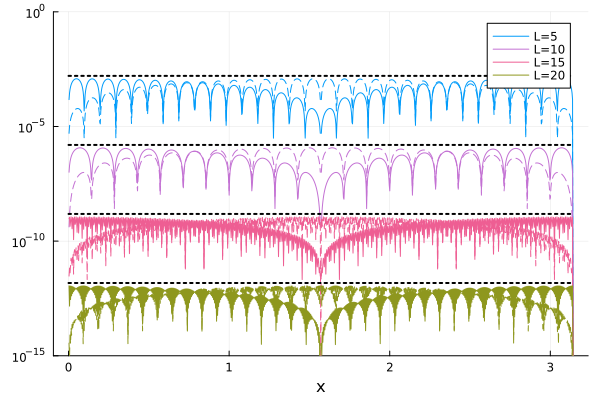}
  \end{subfigure}

  \caption{Approximation error of Algorithm~\ref{alg:sincos}. Left: maximum error on $[0,\pi]$ for increasing $L$ for the functions $\cos(x)$ and $\sin(x)$ using Taylor series initialization (solid) or interpolation (dashed). Right: pointwise error on $[0,\pi]$ for different values of $L$ for $\cos(x)$ (solid) and $\sin(x)$ (dashed), using interpolation. The dotted black lines show the values $2^{-2L}\pi^2/6$.}
  \label{fig:trig_error}
\end{figure}

\subsubsection{Increasing the approximation domain via symmetry}

The approximations were defined on the interval $[0,\pi]$ which, due to symmetry, is sufficient to determine $\cos(x)$ and $\sin(x)$ everywhere. Consider the cosine function, which has even symmetry with respect to the points $x=0$ and $x=\pi$. We can use the sawtooth function~\eqref{eq:sawtooth} to exploit that symmetry. Given an approximation $f_L(x) \approx \cos(x)$, for $x \in [0,\pi]$, we can define a periodic approximation
\[
 f_{L,s}(x) = f_L\left(\pi g_s\left( \frac{x}{\pi 2^s} \right) \right), \qquad x \in [0,\pi 2^s].
\]
The sawtooth function, appropriately scaled as above, folds the large interval $[0,\pi 2^s]$ onto $[0,\pi]$ in a way that is compatible with the symmetries of the cosine function.

In other words, by adding $s$ preprocessing layers to the neural network approximation of $\cos(x)$, we can increase the size of the approximation domain by an exponential factor.

The sine function satisfies odd symmetry with respect to the points $0$ and $\pi$. Thus, each fold also comes with a multiplication by $-1$. This is not a continuous operation. One can start from the sine function on the interval $\left[-\frac{\pi}{2},\frac{\pi}{2}\right]$ instead, such that it has even symmetries, or one can simply use the property $\sin(x) = \cos\left(\frac{\pi}{2}-x\right)$ and approximate the cosine.

\subsection{A polynomial basis: monomials}\label{ss:monomials}

\begin{lstlisting}[float=t,caption={Recursive approximation to $x^k$ on the interval {$\left[ 0,1\right]$} for {$k=2,\ldots,d$}.}, label={alg:monomials}]
function monomials_approx(x, d, j = 1, L = 20)
    if j == L+1
        xL = 1/2^L
        return [xL^k * x/xL for k in 0:d]    # interpolating approximation
    end
    pivot = 1/2^j
    z = monomials_approx(pivot - abs(x-pivot), d, j+1, L)
    if x > pivot
        u = zeros(degree+1)
        u[1] = 1
        u[2] = x
        for k in 2:degree
            u[k+1] = (-1)^k*z[k+1]
            for l in 0:k-1
                u[k+1] = u[k+1] - binomial(k, l)*(-1)^(k+l)*2^(k-l)*pivot^(k-l) * u[l+1]
            end
        end
        return u
    else
        return z
    end
end
\end{lstlisting}

Polynomials of higher degree can be realized in terms of a neural network approximation of $x^2$ and one of the multiplication operator. One multiplies $x^2$ by $x$ a number of times. This construction is frequently used in literature~\cite{grohs2022expressivity,opschoor2019highorderfem}. However, it is not the only or most efficient way.

A network with smaller depth results from studying the folding operation. In the case where $x$ is larger than the pivot $s_j$, we aim to relate $f(x)$ to $f(2s_j-x)$. We note for the case of monomials that
\[
 h_j(x)^k = (2s_j-x)^k = \sum_{l=0}^k \binom{k}{l} (2s_j)^{k-l} (-x)^l, \qquad x \geq s_j.
\]
This relation suggests that we should approximate the functions $\{ x^l \}_{l=0}^k$ together. Solving for $x^k$ we find the unfolding operation
\begin{equation}\label{eq:monomials_unfolding}    
 x^k = (-1)^k \, h_j(s)^k - \sum_{l=0}^{k-1} \binom{k}{l}(-1)^{k+l} 2^{k-l} s^{k-l} x^l.
\end{equation}
Thus, having determined the powers of $x$ up to degree $k-1$, the relation above yields the next degree monomial. This leads to Algorithm~\ref{alg:monomials}.

The unfolding operation~\eqref{eq:monomials_unfolding} is more complicated than the ones we have seen before. However, the relation is affine. Indeed, we can express $x^k$ affinely in terms of $h_j(x)^k$ and lower degree monomials. In turn, the lower degree monomials $x^l$ can be expressed affinely in terms of $h_j(x)^l$. Thus, the map from $\{ h_j(x)^l \}_{l=0}^k$ to $\{ x^l \}_{l=0}^k$ is affine. It can be implemented in a single layer.

\begin{figure}[t]
  \centering

  \begin{subfigure}[b]{0.45\textwidth}
    \centering
    \includegraphics[width=\textwidth]{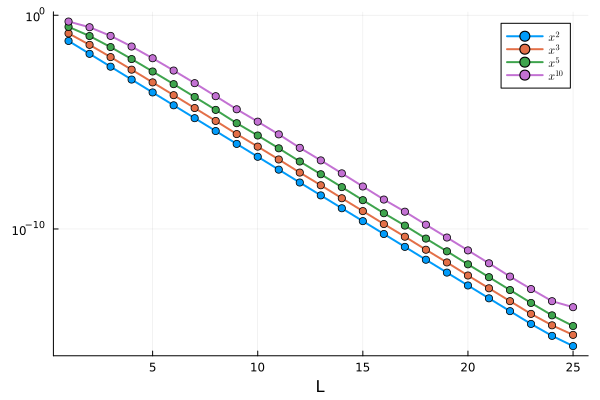}
  \end{subfigure}
  \hfill
  \begin{subfigure}[b]{0.45\textwidth}
    \centering
    \includegraphics[width=\textwidth]{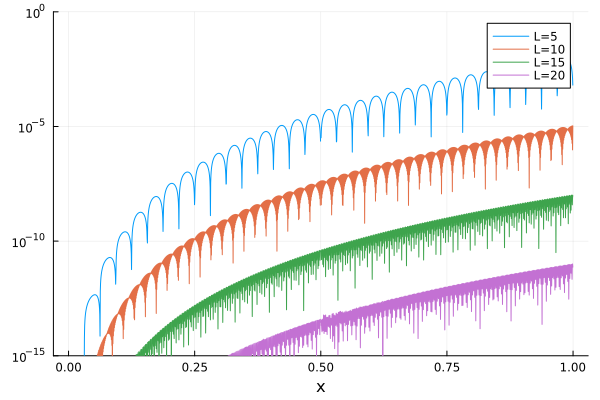}
  \end{subfigure}

  \caption{Approximation error of Algorithm~\ref{alg:monomials} for the computation of monomials up to degree $d=10$. Left: maximum error on $[0,1]$ for increasing $L$ and for a few degrees. Right: pointwise error on $[0,1]$ for different values of $L$ for $f(x) = x^{10}$.}
  \label{fig:monomials_error}
\end{figure}

The constant term $x^0$ and the first monomial $x^1$ need not be computed. There remain $d-1$ conditional expressions in Algorithm~\ref{alg:monomials}. The branches define corresponding functions $b_k(x)$ and $c_k(x)$. To satisfy Assumption~\ref{as:assumption2}, we observe that we can choose $\gamma_k = 1$ and $\beta_k = k s_j^{k-1}$, where $s_j$ is the pivot. Note that $\beta_k$ is precisely the derivative of $x^k$ at the pivot. Rapid convergence is illustrated in Fig.~\ref{fig:monomials_error}. The largest network when producing this figure corresponds to the parameters $L=25$ and polynomial degree $d=10$. That network has depth $52$ and width $68$.

\subsection{The multiplication operator}

We end with a bivariate example, the multiplication function $f(x,y)=xy$. We aim to avoid its indirect approxiation via multiple squares as expressed by~\eqref{eq:multiplication}.

There are several possibilities to fold the unit square $[0,1]^2$ onto smaller and smaller squares. The simplest choice is to refine $x$ as in the previous examples, while keeping $y$ fixed. We find for $x > s_j$ that
\[
h_j(x) y = (2s_j-x)y = 2 s_j y - xy,
\]
from which we find the unfolding relation
\[
xy = -h_j(x)y + 2 s_j y.
\]
This is a simple relation because, for fixed $y$, the multiplication function is linear in $x$. However, the approximation on $[0,2^{-L}]$ presents an issue. The linear Taylor series approximation to $xy$ is $xy$ itself, which we can not realize. Instead, we can choose to approximate $xy$ by zero. Yet, this approximation is only first order accurate. As a result, the overall scheme converges at a rate proportional to $2^{-L}$. The previous algorithms converge much more rapidly at the rate $2^{-2L}$.

Quadratic convergence can be recovered by refining both in $x$ and in $y$. We could do so in alternating order. That leads to Algorithm~\ref{alg:mul}. In this case, approximation by zero on for $(x,y) \in [0,2^{-L}] \times [0,2^{-L}]$ yields an error that scales as $2^{-2L}$.

\begin{lstlisting}[float=t,caption={Recursive approximation to $f(x,y)=xy$ on the square {$\left[ 0,1\right]^2$}.}, label={alg:mul}]
function mul_approx(x, y, j = 1, L = 20)
    if j == L+1
        return 0.0
    end
    pivot = 1/2^j
    z = mul_approx_y(pivot - abs(x-pivot), y, j, L)
    if x > pivot
        2*pivot*y - z
    else
        z
    end
end

function mul_approx_y(x, y, j, L)
    pivot = 1/2^j
    z = mul_approx(x, pivot - abs(y-pivot), j+1, L)
    if y > pivot
        2*pivot*x - z
    else
        z
    end
end
\end{lstlisting}

\begin{figure}[t]
  \centering

  \begin{subfigure}[b]{0.45\textwidth}
    \centering
    \includegraphics[width=\textwidth]{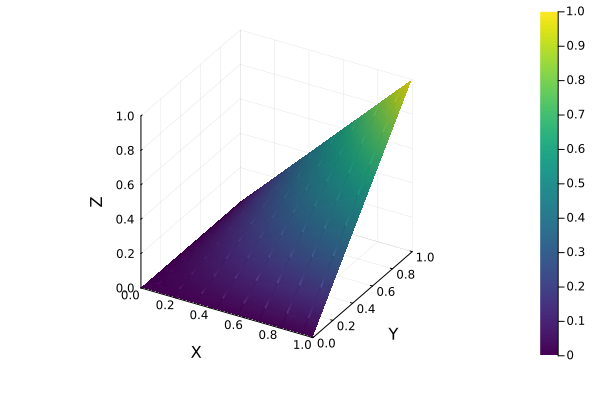}
    \caption{Surface plot of $xy$ on $[0,1]^2$}
  \end{subfigure}
  \hfill
  \begin{subfigure}[b]{0.45\textwidth}
    \centering
    \includegraphics[width=\textwidth]{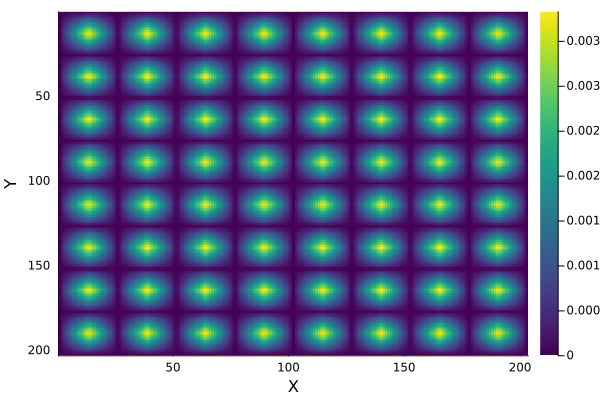}
    \caption{Approximation error for $L=4$}
  \end{subfigure}
  \caption{Illustration of the approximation to $f(x,y)=xy$ by Algorithm~\ref{alg:mul}.}
  \label{fig:mul}
\end{figure}

Some results are shown in Fig.~\ref{fig:mul}. The left panel shows the bivariate function being approximated. The right panels shows the error of Algorithm~\ref{fig:mul} for $L=4$. It clearly shows an $8 \times 8$ grid of pieces. Each of those pieces corresponds to a planar piece approximating a part of the surface shown in the left panel. The convergence behaviour for larger $L$ is similar to that of the previous examples.

\section{Discussion and concluding remarks}

\subsection{What is the scope of neural network algorithms?}

We have offered no concrete answer to the question which classes of algorithms can be realized with a neural network. We have focused on conditional branching in~\S\ref{s:branching}. Based on the examples in this paper, we can make a few additional observations:
\begin{itemize}
    \item in our examples one function call corresponds to one layer
    \item the recursion depth is fixed by the size of the network and bounded by the number of layers
    \item local variables of fixed size can be implemented by adding neurons to a layer, such as the two-element vector $z$ in Algorithm~\ref{alg:exp} and in Algorithm~\ref{alg:sincos}, and the vector $u$ of length $d+1$ in Algorithm~\ref{alg:monomials}
    \item similar to allocation on the stack, variables in the local scope of a function can be preserved over function calls by skipping connections, a technique which we have applied for the $x$ variable for all algorithms in \S\ref{s:examples}, making $x$ available both before and after the recursive function call (though local scoping is lost as $x$ is also available in between)
    \item the innermost for loop in Algorithm~\ref{alg:bitonic_sort} (with index $k$) for bitonic sorting could be realized in parallel, whereas the two outer for loops represent sequential operations which correspond to different layers
    \item the double for loop in Algorithm~\ref{alg:monomials} for $x^k$ could be realized as a matrix-vector product
\end{itemize}

The class of feedforward neural networks we have considered only allows quantities of fixed size, whose sizes are `known at compile time'. The network can implement allocation on the stack, but not on the heap. More flexibility is offered by recurrent neural networks, for which Turing completeness has been a subject of study~\cite{siegelmann1995computationalpower}.

It is evident that most algorithms can not be compiled to a neural network. However, if applicable, an algorithm clearly conveys what the network does and is amenable to analysis.

\subsection{What is the scope of neural network functions?}

We have focused on methodologies to implement an algorithm using a feedforward neural network with the ReLU activation function. It seems highly plausible that many more functions can be approximated in similar ways using folding and unfolding techniques. This includes functions of several variables.

Yet, all examples in~\S\ref{s:examples} have been special functions, for which shift relations are known analytically, which in turn lead to folding relations. It is an intriguing question whether recursive approximations can be found based on data, say based on function evaluations at the types of nested equispaced grids that we have considered.

There is another indication that the class of functions and algorithms we have considered are special cases. The networks in this paper are all sparse, or even highly sparse, in the sense that most of the weights are zero. Sparsity in the folding algorithms results from the definition of foldings between two consecutive layers only. In principle, the neural network has all the data from all finer layers at its disposal to return the next layer. In this context of function approximation from graded meshes, a dense network corresponds to linking the function to itself across multiple scales.

The approximation of monomials led to a more dense network than the other examples, yet it remains sparse overall. The algorithms and examples in this paper fall short of providing insight into the more general setting of dense deep neural networks. From the algorithmic perspective, dense networks would be both more multiscale in nature and more parallel than the examples in this paper.

\section*{Acknowledgements}

The author expresses his gratitude for enriching discussions on the topic of this paper with Simon Dirckx, Astrid Herremans and Yuji Nakatsukasa.

\bibliographystyle{abbrv}
\bibliography{references}

\end{document}